\def\ar{\leftarrow}
\newcommand{\as}{``}
\newcommand{\be}{\begin{em}}
\newcommand{\ee}{\end{em}}
\newcommand{\bb}{\begin{bf}}
\newcommand{\eb}{\end{bf}}
\newcommand{\I}[1]{\relax\ifmmode\mbox{\it#1}\else{\it#1}\fi}
\newcommand{\tbs}{\hspace*{4mm}}
\newcommand{\tbl}{\hspace*{12mm}}
\newcommand{\no}{not\,}
\newcommand{\meno}{\medskip\noindent}
\newcommand{\rif}{~\ref}
\newcommand{\IF}{\mbox{\,:-\,}}
\newcommand{\emptySeq}{\relax\ifmmode \varepsilon\else$\varepsilon$\fi}   
\newcommand{\tuple}[1]{\langle #1 \rangle}
\newcommand{\D}{\mathit{Dir}}
\newcommand{\R}{\mathit{Reach}}
\def\KB{\mathit{KB}}
\def\ACC{\mathit{ACC}}
\def\act{\mathit{act}}
\def\U{{\cal U}}
\def\T1{T\!+\!1}
\begin{document}

\title{Multi-Context Systems: Dynamics and Evolution\thanks{This work is partially supported by INdAM-GNCS-17 project, by Xunta de Galicia, Spain (projects GPC ED431B 2016/035 and 2016-2019 ED431G/01 for CITIC center) and by the European Regional Development Fund (ERDF).}
}


\author{Pedro~Cabalar \and Stefania~Costantini \and Giovanni~De~Gasperis \and Andrea~Formisano}


\institute{P. Cabalar \at
              University of Corunna, Spain  \\
              \email{cabalar@udc.es}           
           \and
           S. Costantini \at
              Universit{\`a} di L'Aquila, Italy\\ \email{stefania.costantini@univaq.it}
                         \and
           G. De Gasperis \at
              Universit{\`a} di L'Aquila, Italy\\ \email{giovanni.degasperis@univaq.it}
           \and
           A. Formisano \at
              Universit{\`a} di Perugia, GNCS-INdAM, Italy\\ \email{andrea.formisano@unipg.it}
}

\date{Received: date / Accepted: date}

\maketitle

\begin{abstract}
Multi-Context Systems (MCS) model in Computational Logic distributed systems composed of heterogeneous sources, or
``contexts'', interacting via special rules called ``bridge rules''.
In this paper, we consider how to enhance flexibility and generality in bridge-rules definition and application. In particular, we introduce and discuss some formal extensions of MCSs useful for a practical use in dynamic environments, and we try to provide guidelines for implementations.

\keywords{Automated Reasoning \and Multi-Context Systems \and Heterogeneous Distributed Systems}
\end{abstract}

\section{Introduction}\label{introduction}
Multi-Context Systems (MCSs) have been proposed 
in Artificial Intelligence and Knowledge Representation
to model information exchange among heterogeneous sources \cite{BrewkaE07,BrewkaEF11,BrewkaEP14}.
MCSs do not make any assumption about such sources nor require them to be homogeneous; rather, the MCS approach deals explicitly
with their different representation languages and semantics.
These sources, also called \emph{contexts} or \emph{modules}, interact through 
special interconnection expressions called \emph{bridge rules}; such rules are very similar in syntax and in meaning to logic programming rules as seen in Datalog (cf. \cite{lloyd:foundations,lpneg:survey})
or Answer Set Programming (ASP, cf. \cite{GelLif88,GelLif91,MarTru99,Gelfond07} for foundations. See also~\cite{ErdemGL16} and the references therein for applications of ASP), save that atoms in their bodies refer to knowledge items to be obtained from external contexts.
So, a literal $(c_j:p)$ (where $p$ is an atom) in the body of a bridge rule occurring in context $c_i$ means that $p$ can be proved
in context $c_j$ (given $c_j$'s underlying logic and semantics and given $c_j$'s knowledge base present contents),
vice versa $\no (c_j:p)$ means that $p$ cannot be proved in $c_j$.
In case the entire body \as succeeds'' then the bridge rule can be applied, and its conclusion can be exploited within context~$c_i$.

The reason why MCSs are particularly interesting is that they aim at a formal modeling 
of real applications requiring access to distributed sources, possibly on the web.
In many application domains the adoption of MCSs can bring real advances, whenever different types of heterogeneous information
systems are involved and a rigorous formalization should be adopted, also in view of reliability and verifiability of the resulting systems.
Notice that this kind of systems often involves agents;
MCSs encompassing logical agents have in fact been proposed in the literature (cf.,~\cite{CostantiniDM1}).

Given the potential impact of MCSs for practical knowledge representation and reasoning,
there are some aspects in which their definition is still too abstract.
In this paper, we introduce and discuss some formal extensions of MCSs useful for a practical use in dynamic environments and we try to provide guidelines for implementations.
The paper considers the following aspects:
\renewcommand{\labelenumi}{(\roman{enumi})}
\begin{enumerate}
\item
The specification of bridge-rule grounding and of \emph{grounded equilibria}, and a concept of \emph{proactive} bridge-rule activation, i.e., a bridge-rule in our approach is
no longer reactively applied whenever applicable, but rather it must be explicitly enabled by the context where it occurs.
\item
The explicit definition of the \emph{evolution} of an MCS over time in consequences of updates that
can be more general than sensor input as treated in \cite{BrewkaEP14}.
\item
A further refinement of bridge-rule grounding in an evolving MCS.
\item
Bridge Rules Patterns, by which we make bridge rules parametric w.r.t.\ contexts by introducing special
terms called \emph{context designators} to be replaced by actual context names;
this is in our opinion a very important extension, which allow general bridge
rules schemata to be specialized to actual bridge rules by choosing which specific contexts
(of a certain kind) should specifically be queried in the situation at hand.
\item 
Dynamic MCSs whose composition can change over time as the context may join or leave the system, or can be unavailable at certain time intervals (e.g., due to system/network faults).
\item
Multi-Source option, i.e., whenever a context can obtain the required information from several other contexts, a choice may be possible concerning the \emph{preferred} source, where preferences may change over time and according to current circumstances.
\item
The definition of practical modalities of execution of bridge rules in a distributed MCS.
\end{enumerate}
\renewcommand{\labelenumi}{\alph{enumi}}
All these extensions represent substantial and much-needed improvements to the basic MCS framework, as certified by existing connections with related work on MCSs that treat (though in a different way) the same issues. For instance, concerning the fact that a static set of bridge rules associated to a context can sometimes be unsatisfactory, we mention the work of \cite{GoncalvesKL14a}, where, at run-time, 
new bridge rules can be \as observed'', i.e., learned from outside. Such rules can be added to existing ones or can replace them and
 a check on consistency among bridge-rules conclusions is done after the update. This very interesting work is complementary to our
proposal, where new bridge rules are generated inside a context from general meta-level schemata; a combination of the two approaches might confer much more flexibility to contexts' operation.
Also, concerning MCSs that evolve in time, much work has been done, e.g., in \cite{BrewkaEP14,GoncalvesKL14b,GoncalvesKL14a}; our approach either generalizes such proposals or, at least, can be combined with them.

This paper is a revised extended version of the work presented in \cite{CostantiniDG16,CostantiniCF17}.
In the rest of the paper, 
we first recall the MCS approach. Then, after illustrating a motivating scenario, we introduce a list of aspects concerning MCS that are in our opinion problematic, or that might be improved; in the subsequent sections, for every such aspect we discuss and formalize some variations, enhancements, and extensions to basic MCSs that we deem to be necessary/useful to improve the basic paradigm. Subsequently we present a case study, we discuss the complexity of the enhanced framework and finally we conclude. We have chosen to discuss related work whenever a discussion is useful and pertinent rather than in a separate section.

\section{Bridge Rules and Multi-Context Systems}  
\label{MCS}

Heterogeneous MCSs have been introduced in~\cite{GiunchigliaS94}
in order to integrate different inference systems without resorting to non-classical logic systems.
%
Later, the idea has been further developed and generalized to non-monotonic reasoning domains
---see, for instance,  \cite{BrewkaE07,BrewkaEF11,BrewkaEFW11,BrewkaEP14,Serafini2004} among many.
(Managed) MCSs are designed for the construction of systems that need to access multiple (heterogeneous) data sources called \emph{contexts}.
The information flow is modeled via \emph{bridge rules}, whose form is similar to Datalog rules with negation
where however each element in their \as body'' explicitly includes the indication of the contexts from which each item of information
is to be obtained.
To represent the heterogeneity of sources, each context is supposed to be based on its own \emph{logic}, defined in a very general way~\cite{BrewkaEF11}.
A logic $L$ defines its own syntax as a set $F$ of possible \emph{formulas} (or $\mathit{KB}$-elements)
under some signature possibly containing \emph{predicates}, \emph{constants}, and \emph{functions}.
As usual, formulas are expressions built upon the idea of \emph{atom}, that is, the application of a predicate to a number $n$ (the predicate arity) of \emph{terms}.
The latter, in their turn, can be variables, constants, or compound terms using function symbols, as usual.
A \emph{term/atom/formula} is \emph{ground} if there are no variables occurring therein. 
A logic is \emph{relational} if in its signature the set of function symbols is empty, so its terms are variables and constants only. 
Formulas can be grouped to form some \emph{knowledge base}, $kb \in 2^F$.
The set of all knowledge bases for $L$ is defined as some $KB \subseteq 2^F$.
The logic also defines \emph{beliefs} or \emph{data} (usually, ground facts) that can be derived as consequences from a given $kb \in \KB$.
The set $Cn$ represents all possible belief sets in logic $L$.
Finally, the logic specification must also define some kind of inference or entailment.
This is done by defining which belief sets are \emph{acceptable} consequences of a given $kb \in \KB$
with a relation $\ACC \subseteq \KB \times Cn$. Thus, $\ACC(kb,S)$ means that belief set $S$ is an acceptable consequence of knowledge base $kb$.
We can also use $\ACC$ as a function $\ACC: \KB \to 2^{Cn}$ where $S \in \ACC(kb)$ is the same as $\ACC(kb,S)$ as a relation.
To sum up, logic $L$ can be completely specified by the triple $\tuple{\KB, Cn, \ACC}$.

A \emph{multi-context system} (MCS) \(M = \{C_1,\ldots,C_{\ell}\}\) is a
set of ${\ell}=|M|$ \emph{contexts}, each of them of the form $C_i = \tuple{c_i,L_i,kb_i,br_i}$,
where $c_i$ is the context \emph{name} (unique for each context; if a specific name is omitted, $i$ can act as such),  $L_i=\tuple{\KB_i,Cn_i,\ACC_i}$ is a logic,
$kb_i \in \KB_i$ is a knowledge base, and $br_i$ is a set of \emph{bridge rules}. Each bridge rule $\rho \in br_i$ has the form
\begin{eqnarray}
s \ar A_1, \ldots, A_h,\no A_{h+1}, \ldots,\no A_{m} \label{f:br}
\end{eqnarray}
where the left-hand side $s$ is called the \emph{head}, denoted as $hd(\rho)$,
the right-hand side is called the \emph{body}, denoted as $\mathit{body}(\rho)$,
and the comma stands for conjunction.
We define the \emph{positive} (resp. \emph{negative}) \emph{body} as $\mathit{body}^+(\rho)=\{A_1,\dots,A_h\}$ (resp. $\mathit{body}^-(\rho)=\{A_{h+1},\dots,A_m\}$).
The head $hd(\rho)=s$ is a formula in $L_i$ such that $(kb_i \cup \{s\}) \in \KB_i$.
Each element $A_k$ in the body has the form $(c_j:p)$ for a given $j$, $1 \leq j \leq |M|$,
and can be seen as a \emph{query} to the context $C_j \in M$ (possibly different from $C_i$) whose name is $c_j$,
to check the status of belief $p$ (a formula from $L_j$) with respect to the current belief state (defined below) in $C_j$. 
When the query is made in the context $j=i$ we will omit the context name, simply writing $p$ instead of $(c_i:p)$.
A \emph{relational} MCS \cite{FinkGW11} is a variant where all the involved logics are relational and aggregate operators 
in database style (like \emph{sum}, \emph{count}, \emph{max}, \emph{min}) are 
admitted in bridge-rule bodies.
%

A \emph{belief state} (or \emph{data state}) $\vec{S}$ of an MCS~$M$
is a tuple $\vec{S} = (S_1,\ldots, S_{\ell})$ such that for $1 \leq i \leq {\ell}=|M|$, $S_i \in Cn_{i}$.
Thus, a data state associates to each context $C_i$ a possible set of consequences $S_i$.
A bridge rule $\rho \in br_i$ of context $C_i \in M$ is \emph{applicable} in belief state $\vec{S}$ when,
for any $(c_j:p_j) \in body^+(\rho)$, $p_j \in S_j$, and for any $(c_k:p_k) \in body^-(\rho)$, $p_k \not\in S_k$;
so, $\mathit{app}(\vec{S})$ is the set composed of the head of those bridge rules which are \emph{applicable} in~$\vec{S}$.

In managed MCSs (mMCSs)\footnote{For the sake of simplicity,
we define mMCS simply over logics instead of \as logic suite'' as done in~\cite{BrewkaEFW11},
	where one can select the desired semantics among a set of possibilities.} the conclusion $s$, which represents the \as bare'' bridge-rule result,
becomes $o(s)$, where $o$ is a special operator.
The meaning is that $s$ is processed by operator $o$, that can perform
any elaboration, such as format conversion, belief revision, etc.
More precisely, for a given logic $L$ with formulas $F = \{s \in kb\, |\, kb \in \KB\}$, a \emph{management base} is a set of operation names
(briefly, operations) $OP$, defining elaborations that can be 
performed on formulas. 
For a logic $L$ and a management base
$OP$, the set of
operational statements that can be built from $OP$ and $F$ is \(F^{OP} = \{o(s)\, |\, o \in OP, s \in F\}\).
The semantics of such statements is given by a \emph{management function}, \(\mathit{mng}: 2^{F^{OP}}\times \KB \rightarrow \KB\),
which maps a set of operational statements and a knowledge base into a (possibly different)
modified knowledge base.\footnote{We assume a management function to be deterministic, i.e., to produce a unique new knowledge base.}
Now, each context $C_i = \tuple{c_i,L_i,kb_i,br_i,OP_i,mng_i}$ in an mMCS is extended to
include its own management function $mng_i$ which is crucial for knowledge incorporation from external sources.
Notice that, management functions are not required to be monotonic operators.

The application of the management function $mng_i$ to the result of the applicable rules must be acceptable with respect to $ACC_i$.
We say that a belief state $\vec{S}$ is an \emph{equilibrium} for an mMCS $M$ iff, for $1 \leq i \leq |M|$,
\begin{eqnarray}
S_i \in ACC_i(mng_i(\mathit{app}(\vec{S}),kb_i)) \label{f:eq}
\end{eqnarray}

\noindent
Namely, one
\renewcommand{\labelenumi}{(\roman{enumi})}
\begin{enumerate}
\item
 applies all the bridge rules of $C_i$ that are applicable in the belief state~$\vec{S}$ (namely, the set $\mathit{app}(\vec{S})$);
\item
 applies the management function which, by incorporating bridge-rule results into $C_i$'s knowledge base $kb_i$, computes a new knowledge base $kb_i'$;
\item
 determines via $ACC_i$ the set of acceptable sets of consequences of $kb_i'$.
\end{enumerate}
\renewcommand{\labelenumi}{\alph{enumi}}
In an equilibrium, such a set includes $S_i$, i.e., an equilibrium is \as stable'' w.r.t.\ bridge-rule application.

Conditions for existence of equilibria and the complexity of deciding equilibrium existence for mMCS have been studied \cite{BrewkaE07}; roughly speaking,
such complexity depends on the complexity of computing formula (\ref{f:eq}) for each $C_i \in M$.
Algorithms for computing equilibria have been recently proposed \cite{BrewkaEF11,EiterDFK2014}
but, in practice, they are only applicable when open-access to contexts' contents is granted.
For practical purposes, one will 
often provisionally assume that equilibria exist, and that they do not 
include inconsistent data sets.
It has been proved that such \emph{local consistency} is achieved whenever all management functions
are (lc-)preserving, i.e., if they always determine a $kb'$ which is consistent. 

\section{Motivating Scenarios and Discussion}\label{motdisc}

Some of the reasons of our interest in (m)MCSs and bridge-rules stem from a project where
we are among the proponents \cite{FeK2016}, concerning smart Cyber Physical Systems, with
particular attention (though without restriction) to applications in the e-Health field.
The general scenario of such \as F\&K'' (\as Friendly-and-Kind'') systems \cite{FeK2016} is depicted in Figure\rif{FKfig}.

In such setting we have a set of computational entities, of knowledge bases, and of sensors,
all immersed in the \as Fog'' of the Internet of Everything. 
All components can, in time, join or leave the system. 
Some computational components will be agents. In the envisaged
e-Health application for instance, an agent will be in charge of
each patient. The System's engine will keep track of present system's
configuration, and will enable the various classes of rules to work properly. Terminological rules will allow for more flexible knowledge exchange 
via Ontologies. Pattern Rules will have the role of defining and checking coherence/correctness of system's behavior. 
Bridge rules are the vital
element, as they allow knowledge to flow among components in a clearly-specified principled way: referring to Figure\rif{FKfig}, devices for bridge-rule functioning can be considered as a part of the System's engine. Therefore,
F\&Ks are "knowledge-intensive" systems, providing flexible access to dynamic, heterogeneous, and
distributed sources of knowledge and reasoning, within a highly dynamic computational environment.
We basically consider such systems to be
(enhanced) mMCSs: as mentioned in fact, suitable extensions to 
include agents and sensors in such systems already exist. 

\begin{figure}[tb]
{\centering\includegraphics[width=0.95\linewidth]{./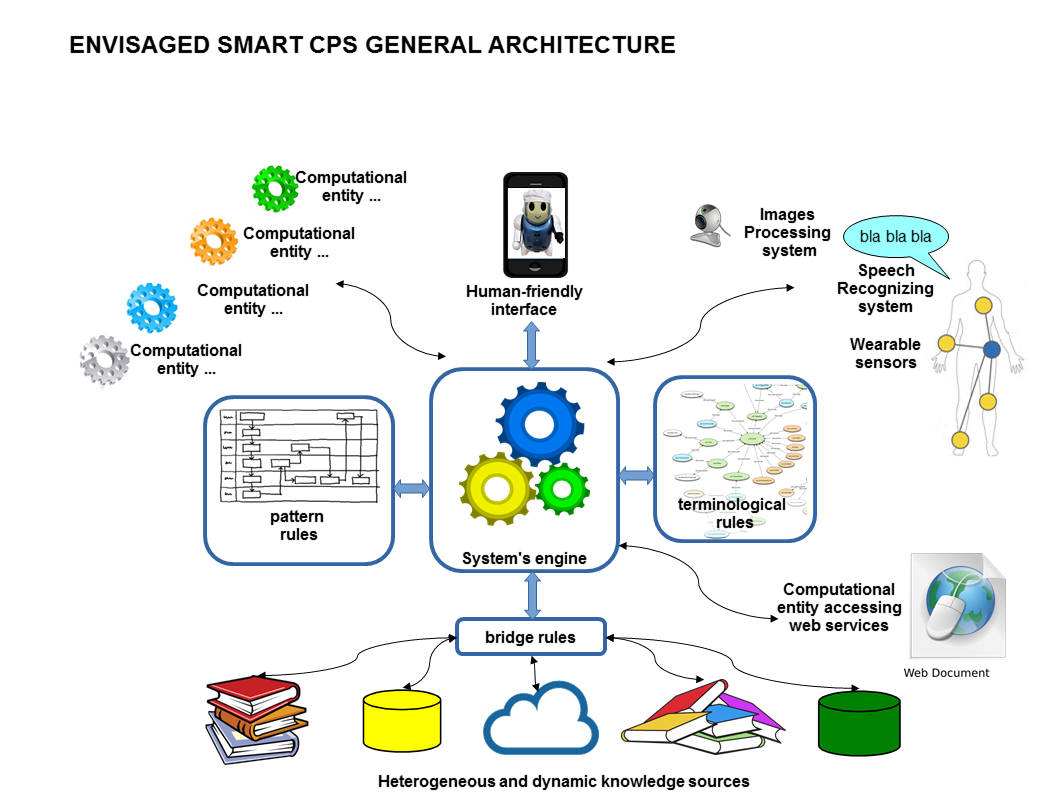}
	\caption{Motivating Scenario}
	\label{FKfig}
}\end{figure}

Another application (depicted in Figure\rif{DINVfig}) in a very different domain though with some analogous features
is aimed at Digital Forensics (DF) and Digital Investigations (DI). 
Digital Forensics is a branch of criminalistics which deals with the identification,
acquisition, preservation, analysis, and presentation of the information content of
computer systems, or in general of digital devices, {in relation to crimes that have been perpetrated}.
The objective of the Evidence Analysis stage of Digital Forensics, or more generally of Digital Investigations, is to identify, categorize, and formalize digital sources of {evidence} (or, however, sources of {evidence} represented in digital form). The objective is to organize such sources of proof into evidences, so as to make them robust in view
of their discussion in court, either in civil or penal trials.
\begin{figure}[tb]
{\centerline{\includegraphics[width=0.95\linewidth]{./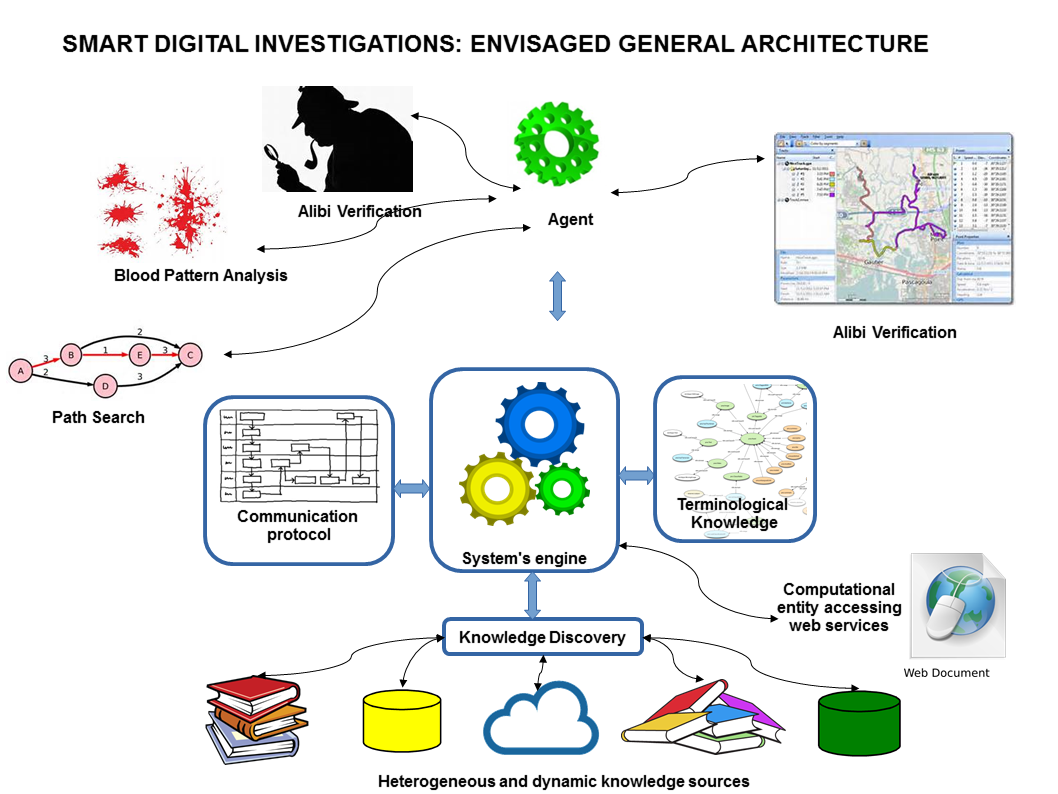}}
	\caption{Digital Forensics (DF) and Digital Investigations (DI) Scenario}
	\label{DINVfig}
}\end{figure}

In recent work\footnote{Supported by Action COST CA17124 \as DIGFORASP: DIGital FORensics: evidence Analysis via intelligent Systems and Practices, start September 10, 2018.''} \cite{CostantiniLPNMR215,OlivieriTh16} we have identified a setting where an intelligent agent is in charge of supporting the human investigator in such activities.
The agent should {help identify, retrieve, and} gather the various kinds of {potentially useful} evidence, process them via suitable reasoning modules, and integrate the results into coherent evidence. In this task, the agent may need to retrieve and exploit knowledge bases concerning, e.g., legislation, past cases, suspect's criminal history, and so on. In the picture, the agent considers: results from blood-pattern analysis on the crime scene, which lead to model such a scene via a graph, where suitable graph reasoning may reconstruct the possible patterns of action of the murderer; alibi verification in the sense of a check of the GPS positions of suspects, so as to ascertain the possibility of her/him being present on the crime scene on the crime time; alibi verification in the sense of double-checking the suspect's declarations with digital data such as computer logs, videos from video cameras situated on the suspect's path, etc. All the above can be integrated with further evidence such as the results of DNA analysis and others. The system can also include Complex Event Processing so as to infer from significant clues the possibility that a crime is being or will be perpetrated.

In our view also this system can be seen as an
(enhanced) mMCSs.
{In reality however, many of the involved data must be retrieved, elaborated, or checked from knowledge bases belonging to organizations which are external to the agent, and have their own rules and regulations for data access and data elaboration. Therefore, suitable modalities for data retrieval and integration must be established in the agent to correctly access such organizations.} Therefore, a relevant issue that we mention but we do not treat here is exactly that of modalities of access to contexts included in an MCS, which can possibly include time limitations and the payment of fees.

In the perspective of the definition and implementation of such kind of systems as mMCSs, the definition recalled in Section\rif{MCS} must be somehow enhanced. Our first experiments show in fact that such definition is, though neat, too abstract when confronted with practical implementation issues.

We list below and discuss some aspects that constitute limitations to the actual flexible applicability of the MCS paradigm, or that might anyway be usefully improved. 
For the sake of illustration, we consider throughout the rest of the paper two running examples.
The first example has been introduced in \cite{CostantiniF16a}, where we have proposed Agent Computational Environments (ACE), that are actually MCSs including agents among their components. In \cite{CostantiniF16a}, an ACE has been used to model a personal assistant agent aiding a prospective college student; the student has to understand to which universities she could send an application, given the subjects of interest, the available budget for applications and then for tuition, and other preferences. Then she has to realize where to perform the tests, and where to enroll among the universities that have accepted the application. The second example concerns F\&Ks, and is developed in detail in Section\rif{casestudy} where we propose a full though small case study. There, a patient can be in need of a medical doctor, possibly a specialized one, and may not know in advance which one to consult.  In the example we adopt a Prolog-like syntax (for surveys about logic programming and Prolog, cf. \cite{lloyd:foundations,lpneg:survey}), with constant, predicate and function symbols indicated by names starting with a lowercase letter, and variable symbols indicated by names starting with an uppercase letter.

\begin{description}
\item[{\it Grounded Knowledge Assumption.}]
Bridge rules are by definition ground, i.e., they do not contain variables.
In \cite{BrewkaEFW11} it is literally stated that [in their examples] they 
\as \emph{use for readability and succinctness schematic
	bridge rules with variables (upper case letters and '\_' [the 'anonymous' variable]) which
	range over associated sets of constants; they stand for all
	respective instances (obtainable by value substitution)}''.
The basic definition of mMCS does not require either contexts' knowledge bases or bridge rules to be finite sets.
Though contexts' knowledge bases will in practice be finite, they cannot be assumed to necessarily
admit a finite grounding, and thus a finite number of bridge-rules' ground instances.
This assumption can be reasonable, e.g., for 
standard relational databases and logic programming under the answer set semantics~\cite{Gelfond07}.
In other kinds of logics, for instance simply \as plain'' general logic programs, it is no longer realistic. 
In practical applications however, there should either be a finite number of applicable (ground instances of)
bridge-rules, or some suitable device for run-time dynamic bridge-rule instantiation and application should be provided.
The issue of bridge-rule grounding has been discussed in \cite{BarilaroFRT13} for relational MCSs,
where however the grounding is performed over a carefully defined finite domain, composed of constants only.
Consider for instance a patient looking for a cardiologist from a medical directory, represented as a context, say, e.g., called $\mathit{med\mbox{-}dir}$.
A ground bridge rule might look like:

\smallskip\noindent
\(\begin{array}{l}
\mathit{find\_cardiologist(maggie\mbox{-}smith)} \ar\\
\tbl\mathit{med\mbox{-}dir : cardiologist(maggie\mbox{-}smith)}
\end{array}\)

\noindent where the patient would confirm what she already knows, i.e., the rule verifies that Dr. Maggie Smith is actually listed in the directory. Instead, a patient may in general intend to use a rule such as:

\smallskip\noindent
\(\begin{array}{l}
\mathit{find\_cardiologist(N)} \ar\\
\tbl\mathit{med\mbox{-}dir : cardiologist(N)}
\end{array}\)

\noindent where the query to the medical directory will return in variable $N$ the name (e.g., Maggie Smith) of a previously unknown cardiologist. Let us assume that $N$ is instantiated to $\mathit{maggie\mbox{-}smith}$ which is the name of a context representing the cardiologist's personal assistant.

Similarly, a prospective student's personal assistant will query universities about the courses that they propose, where this information is new and will be obtained via a bridge rule which is not ground.

Notice that for actual enrichment of a context's knowledge one must allow \emph{value invention}, that is, a constant returned via application of a non-ground bridge rule may not previously occur in the destination module; in this way, a module can  \as learn'' really new information through inter-context exchange.

\item[{\it Update Problem.}]
Considering inputs from sensor networks as done in \cite{BrewkaEP14} is a starting point to make MCSs evolve in time and to have contexts which update their knowledge base and thus cope flexibly with a changing environment.
However, sources can be updated in many ways via the interaction with their environment.
For instance, agents are supposed to continuously modify themselves via sensing and communication with other agents,
but even a plain relational database can be modified by its users/administrators.
Referring to the examples, a medical directory will be updated periodically, and universities will occasionally change the set of courses that they offer, update the fees and other features of interest to students. Where one might adopt fictitious sensors (as suggested in relevant literature) in order to simulate many kinds of update, a more general update mechanism seems in order. Such mechanism should assume that each context has its own update operators and its own modalities of application. An MCS can be \as parametric'' w.r.t.\ contexts' update operators as it is parametric w.r.t.\ contexts' logics and management functions.

\item[{\it Logical Omniscience Assumption and Bridge Rules Application Mechanisms.}]
\ \ In\\MCS, bridge rules are supposed to be applied whenever their body is entailed by the current data state. In fact bridge rules are, in the original MCS formulation, a reactive device where each bridge rule is applied whenever applicable. In reality, contexts are in general not logical omniscient and will hardly compute their full set of consequences beforehand. So, the set of bridge rules which are applicable at each stage is not fully known. Thus, practical bridge rule application will presumably consist in an \emph{attempt of application}, performed by posing queries, corresponding to the elements of the body of the bridge rule, to other contexts which are situated somewhere in the nodes of a distributed system. The queried contexts will often determine the required answer upon request. 
Each source will need time to compute and deliver the required result
and might even never be able to do so, in case of reasoning with limited resources
or of network failures.
Moreover, contexts may want to apply bridge rules only if and when their results are needed. So, a generalization of bridge-rule applicability in order to make it \emph{proactive} rather than reactive can indeed be useful. For instance, a patient might look for a cardiologist (by enabling the bridge rule seen above) only if some health condition makes it necessary.

\item[{\it Static Set of Bridge Rules.}]
Equipping a context with a static set of bridge rules can be a limitation; in fact, in such bridge rules all contexts to be queried are fully known in advance. In contexts' practical reasoning instead, it can become known only at \as run-time'' which are the specific contexts to be queried in the situations that practically arise. To enhance flexibility in this sense, we introduce
\emph{Bridge Rules Patterns} to make bridge rules parametric w.r.t.\ the contexts to be queried; such patterns are meta-level rule schemata where in place of contexts' names we introduce special
terms called \emph{context designators}. Bridge rule patterns can be specialized to actual bridge rules by choosing which specific contexts
(of a certain kind) should specifically be queried in the situation at hand.
This is, in our opinion, a very important extension which avoid a designer's omniscience about how the system will evolve. For instance, after acquiring as seen above, the reference to a reliable cardiologist (e.g., $\mathit{maggie\mbox{-}smith}$), the patient (say Mary) can get in contact with the cardiologist, disclose her health condition $C$, and thus make an appointment for time $T$. This, in our approach, can be made by taking a general bridge rule

\noindent
\(\begin{array}{l}
\mathit{make\_appointment(mary,T)} \ar\\
\tbl \mathit{condition(mary,C)},\\
\tbl \mathit{mycardiologist(c) : consultation\_needed(mary,C,T)}
\end{array}\)

where $\mathit{mycardiologist(c)}$ is intended as a placeholder, that we call \emph{context designator}, that is intended to denote a context not yet known. Such \emph{bridge rule pattern} can be instantiated, at run-time,  to the specific bridge rule

\smallskip\noindent
\(\begin{array}{l}
\mathit{make\_appointment(mary,T)} \ar\\
\tbl \mathit{condition(mary,C)},\\
\tbl \mathit{maggie\mbox{-}smith : consultation\_needed(mary,C,T)}
\end{array}\)

\item[{\it Static System Assumption.}]
The definition of mMCS might realistically be extended in order to encompass settings where the set of contexts
changes over time. This to take into account dynamic aspects such as momentarily disconnections of contexts 
or the fact that  components may freely either join or abandon the system or that inter-context reachability might change over time.

\item[{\it Full System Knowledge Assumption and Unique Source Assumption.}]
A context might know the \emph{role} of another context it wants to query
(e.g., a medical directory, or diagnostic knowledge base) but not its \as name'' that could be, for instance, its URI or anyway
some kind of reference that allows for actually posing a query.
In the body of bridge rules, each literal mentions however a specific context
(even for bridge rule patterns, context designators must be instantiated to specific context names). We enrich the MCS definition with a directory facility, where instead of context names bridge rules may employ queries to the directory to obtain contexts with the required role. Each \as destination'' context might have its own \emph{preferences} among contexts with the same role. We also introduce a structure in the system, by allowing to specify reachability, i.e., which context can query which one. This corresponds to the nature of many applications: a context can sometimes be not allowed to access all or some of the others, either for a matter of security/privacy or for a matter of convenience. Often, information must be obtained via suitable mediation, while access to every information source is not necessarily either allowed or desirable.

\item[{\it Uniform Knowledge Representation Format Assumption.}]
Different contexts might represent similar concepts in different ways:
this aspect is taken into account in \cite{CostantiniDM2} (and so it is not treated here), where ontological
definitions can be exchanged among contexts and a possible global ontology is also considered. 

\item[{\it Equilibria Computation and Consistency Check Assumption.}]
Algorithms for computing equilibria are practically applicable only if open access to contexts' contents is granted.
The same holds for local and global consistency checking.
However, the potential of MCSs is, in our view, that of modeling real distributed systems where contexts
in general keep their knowledge bases private.
Therefore, in practice, one will often just assume the existence of consistent equilibria. This problem is not treated here, but deserves due attention for devising interesting sufficient conditions.
\end{description}

In the next sections we discuss each aspect, and we introduce our proposals of improvement. We devised the proposed extensions in the perspective
of the application of mMCSs; in view of such an application we realized in fact that, primarily, issues related to the concrete modalities of bridge-rule instantiation, activation and execution need to be considered.

\section{Grounded Knowledge Assumption}
\label{grounded}

To the best of our knowledge, the problem of loosening the constraint
of bridge-rules groundedness has not been so far extensively treated in the literature and no satisfactory solution exists.
The issue has been discussed in \cite{BarilaroFRT13} for relational MCSs,
where however the grounding of bridge rules is performed over a carefully defined finite domain composed of constants only.
Instead, we intend to consider any, even infinite, domain.

In the rest of this section we first provide by examples an intuitive idea of how grounding might be performed, step by step, along with the computation of equilibria. Then, we provide a formalization of our ideas.

\subsection{Intuition}

The procedure for computing equilibria that we propose for the case of non-ground bridge rules is, informally, the following.
\renewcommand{\labelenumi}{(\roman{enumi})}
\begin{enumerate}
\item
We consider a standard initial data state $\vec{S}_0$.
\item
We instantiate bridge rules over ground terms occurring in~$\vec{S}_0$; we thus obtain an initial bridge-rule grounding relative to $\vec{S}_0$.
\item
We evaluate whether $\vec{S}_0$ is an equilibrium, i.e., if
$\vec{S}_0$ coincides with the data state~$\vec{S}_1$ resulting from applicable bridge rules.
\item
In case $\vec{S}_0$ is not an equilibrium, bridge rules can now be grounded w.r.t.\ terms occurring in $\vec{S}_1$, 
and so on, until either an equilibrium is reached, or no more applicable bridge rules are generated. 
\end{enumerate}
\renewcommand{\labelenumi}{\alph{enumi}}

It is reasonable to set the initial data state $\vec{S}_0$ from where to start the procedure as the \as basic'' data state where each element consists of the set of ground atoms obtained from
the initial knowledge base of each context, i.e., the context's Herbrand Universe \cite{lloyd:foundations} obtained by substituting variables with constants: one takes functions, predicate symbols and constants occurring in each context's knowledge base and builds the context's data state element by constructing all possible ground atoms.
By definition, a ground instance of a context $C_i$'s knowledge base is in fact an element of $Cn_i$,
i.e., it is indeed a set of possible consequences, though in general it is not acceptable.
As seen below starting from $\vec{S}_0$, even in cases when it is a finite data state, does not guarantee neither the existence of a finite
equilibrium, nor that an equilibrium can be reached in a finite number of steps.

Consider as a first example an MCS composed of two contexts $C_1$ and $C_2$, both based upon plain Prolog-like logic programming
and concerning the representation of natural numbers. Assume such contexts to be
characterized respectively by the following knowledge bases and bridge rules (where $C_1$ has no bridge rule).

\noindent\(
\begin{array}{l}
\%kb_1\\
\tbs nat(0).\\
\%kb_2\\
\tbs nat(succ(X)) \ar nat(X).\\
\%br_2\\
\tbs nat(X)\ar (c1:nat(X)).
\end{array}
\)\smallskip

The basic data state is \(\vec{S}_0 = (\{nat(0)\}, \emptyset)\), where in fact $C_2$'s initial data state is empty because there is no constant occurring in $kb_2$.
The unique equilibrium is reached in one step from via the application of $br_2$ which
\as communicates'' fact $nat(0)$ to $C_2$. In fact, due to 
the recursive rule, we have the equilibrium $(S_1,S_2)$ where\
\(S_1= \{nat(0)\}\) and \(S_2= \{nat(0),nat(succ(0)),nat(succ(succ(0))),\ldots\} \rangle\) 
I.e.,
$S_2$ is an infinite set representing all natural numbers.

If we assume to add a third context $C_3$ with empty knowledge base and a bridge rule $br_3$
defined as~ \(nat(X)\ar (c2:nat(X))\), \, then the equilibrium would be $(S_1,S_2,S_3)$ with $S_3=S_2$.
There, in fact, $br_3$ would be grounded on the infinite domain of the terms occurring in $S_2$, thus
admitting an infinite number of instances. 

The next example is a variation of the former one where a context $C_1$ \as produces'' the even natural numbers
(starting from $0$) and a context $C_2$ the odd ones.
In this case $kb_2$ is empty.

\noindent\(
\begin{array}{l}
\%kb_1\\
\tbs nat(0).\\
\%br_1\\
\tbs nat(succ(X))\ar (c2:nat(X)).\\
\%br_2\\
\tbs nat(succ(X))\ar (c1:nat(X)).
\end{array}
\)\smallskip

We may notice that the contexts in the above example enlarge their knowledge by means of mutual \as cooperation'', similarly to logical agents.
Let us consider again, according to our proposed method, the basic data state \(\vec{S}_0 = (\{nat(0)\}, \emptyset)\).
As stated above, bridge rules are grounded  on the terms occurring therein. 
$\vec{S}_0$ is not an equilibrium for the given MCS.
In fact, the bridge rule in $br_2$, once grounded on the constant $0$, is applicable but not applied.
The data set resulting from the application, namely,
\(\vec{S}' = (\{nat(0)\}, \{nat(succ(0))\})\)
is not an equilibrium either, because now the bridge rule in $br_1$ (grounded on $succ(0)$) is in turn applicable but not applied.

We may go on, as \(\vec{S}'' = (\{nat(0),nat(succ(succ(0)))\},\{nat(succ(0))\})\)
leaves the bridge rule in $br_2$ to be applied (grounded on $succ(succ(0))$), 
and so on. There is clearly a unique equilibrium that cannot however be reached within finite time,
though at each step we have a data state composed of finite sets.

The unique equilibrium (reached after a denumerably infinite number of steps),
is composed of two infinite sets, the former one representing the even natural
numbers (including zero) and the latter representing the odd natural numbers.
The equilibrium may be represented as:

\noindent\(
\vec{E} = \big(\{nat(0),nat(succ^k(0)), \mbox{\ $k$ mod 2\ } = 0\},~ \{nat(succ^k(0)), \mbox{\ $k$ mod 2\ } = 1\}\big)
\)

We have actually devised and applied an adaptation to non-ground bridge rules of the operational characterization introduced in \cite{BrewkaE07} for the grounded equilibrium
of a \emph{definite} MCS. In fact, according to the conditions stated therein $C_1$ and $C_2$ are monotonic and admit at each step a unique set of consequences and
bridge-rule application is not unfounded (cyclic).
In our more general setting  however, the set of ground bridge rules
associated to given knowledge bases cannot be computed beforehand and the step-by-step computation 
must take contexts interactions into account.

Since reaching equilibria finitely may have advantages in practical cases,
we show below a suitable reformulation of the above example, that sets via a practical expedient a bound on the number of steps. The equilibrium reached will be partial, in the sense of representing a subset of natural numbers, but can be reached finitely and is composed of finite sets.

We require a minor modification in bridge-rule syntax that we then take as given in the rest of the paper;
we assume in particular that whenever in some element the body of a bridge rule 
the context is omitted, i.e., we have just $p_j$ instead of $(c_j : p_j)$,
then we assume that $p_j$ is proved locally from the present context's knowledge base.
Previous example can be reformulated as follows, where we assume the customary Prolog syntax and procedural
semantics, where elements in the body of a rule are proved/executed in left-to-right order.
The knowledge bases and bridge rules now are:

\noindent\(
\begin{array}{l}
\%kb_1\\
\tbs nat(0).\\
\tbs count(0).\\
\tbs threshold(t).\\
\%br_1\\
\tbs new(nat(succ(X)))\IF count(C), threshold(T), C < T, (c2:nat(X)).\\
\end{array}
\)

\noindent\(
\begin{array}{l}
\%kb_2\\
\tbs count(0).\\
\tbs threshold(t).\\
\%br_2\\
\tbs new(nat(succ(X)))\IF count(C), threshold(T), C < T,(c1:nat(X)).\\
\end{array}
\)\smallskip

In the new definition there is a counter (initialized to zero) and some threshold, say $t$.
We will exploit a management function
that suitably defines the operator $new$ which is now applied to bridge-rule results.
A logic programming definition of such management function might be the following, where the counter is
incremented and the new natural number asserted. Notice that such definition
is by no means not logical, as we can shift to
the \as evolving logic programming'' extension \cite{jelia02:evolp}.

\noindent\(
\begin{array}{l}
\tbs new(nat(Z))\IF assert(nat(Z)),\, increment(C).\\
\tbs increment(C)\IF retract(count(C)),\, C1\ is\ C+1,\, assert(count(C1)).
\end{array}
\)\smallskip

Consequently, bridge rules will now produce a result only until the counter reaches the threshold, which guarantees the existence of
a finite equilibrium.

\subsection{Formalization}

Below we formalize the procedure that we have empirically illustrated via the examples, so as to generalize 
to mMCS with non-ground bridge rules the operational characterization of \cite{BrewkaE07} for monotonic MCSs
(i.e., those where each context's knowledge base admits a single set of consequences,
which grows monotonically when information is added to the context's knowledge base).
Following~\cite{BrewkaE07}, for simplicity we assume each bridge-rule body to include only positive literals, and the formula $s$ in its head $o(s)$ to be an atom. 
So, we will be able to introduce the definition of \emph{grounded equilibrium of grade $\kappa$}.
Preliminarily, in order to admit non-ground bridge rules we have to specify how we obtain their ground instances and how to establish applicability.

\begin{definition}
\label{groundbr}
Let $r \in br_i$ be a non-ground bridge rule occurring in context $C_i$ of a given mMCS $M$ with belief state $\vec{S}$. 
A ground instance $\rho$ of $r$ w.r.t.~$\vec{S}$ is obtained by substituting every variable 
occurring in $r$ (i.e., occurring either in the elements $(c_j: p_j)$
in the body of $r$ or in its head $o(s)$ or in both) via (ground) terms occurring in~$\vec{S}$.
\end{definition}

For an mMCS $M$, a data state $\vec{S}$ and a ground bridge rule $\rho$, let $\mathit{app^{{\models}_g}}(\rho,\vec{S})$ be a Boolean-valued function 
which checks, in the ground case, bridge-rule body entailment w.r.t.~$\vec{S}$.
Let us redefine bridge-rule applicability.

\begin{definition}
	\label{newapp}
	The set $\mathit{app}(S)$ relative to ground bridge rules which are applicable in a 
	data state $\vec{S}$ of a given mMCS $M = (C_1,\ldots,C_{\ell})$ is now defined as follows.
	
	\noindent\(\begin{array}{l}
	\mathit{app}(\vec{S}) = \big\{hd(\rho)\ |\ \rho \mbox{\ is a ground instance w.r.t.~$\vec{S}$ of some}\\
	\tbl\ \ \ \ \mbox{\,bridge rule\ } r \in br_i, 1 \leq i \leq {\ell},\ 
	 \mbox{\ and $\mathit{app^{{\models}_g}}(\rho,\vec{S})$ = \emph{true}} \big\}
	\end{array}\)
\end{definition}

We assume, analogously to \cite{BrewkaE07}, that a given mMCS is \emph{monotonic},
which here means that for each $C_i$ the following properties hold:
\renewcommand{\labelenumi}{(\roman{enumi})}
\begin{enumerate}
\item
$ACC_i$ is monotonic w.r.t.\ additions to the context's knowledge base, and
\item
$mng_i$ is monotonic, i.e., it allows to only add formulas to $C_i$'s knowledge base. 
\end{enumerate}
\renewcommand{\labelenumi}{\alph{enumi}}
Let, for a context $C_i$, the function $ACC_i'$ be a variation of $ACC_i$ which selects one single set $E_i$
among those generated by $ACC_i$ and let $ACC_i'$ be monotonic. 
Namely,  given a context $C_i$ and a knowledge base $\hat{kb} \in KB_{L_i}$,
$ACC_i'(\hat{kb}) = E_i$ where $E_i \in ACC_i(\hat{kb})$.
Let, moreover, $\infty$ be the first infinite ordinal number isomorphic to the natural numbers.
We introduce the following definition:

\begin{definition}\label{computedatastate}
Let $M= (C_1,\ldots,C_{\ell})$ be an mMCS with no negative literals in bridge-rule bodies, and assume
	arbitrary choice of function $ACC_i'$ for each composing context~$C_i$.
Let, for each $1 \leq i\leq {\ell}$, $gr(kb_i)$ be the grounding of $kb_i$ w.r.t. the constants occurring in any $kb_j$, for $1 \leq j \leq {\ell}$.
	A \emph{data state of grade $\kappa$} is obtained as follows.
	
	\begin{description}
	\item\meno 
	For $i \leq {\ell}$ and $\alpha = 0$, we let $kb_i^0 = gr(kb_i)$, and we let
	\(\vec{S}^{\alpha} = \vec{S}^0 = (kb_1^0,\ldots,kb_{\ell}^0)\)
	
	\item\meno
	For each $\alpha > 0$,
	we let \(\vec{S}^{\alpha} = (S^{\alpha}_1,\ldots,S^{\alpha}_{\ell})\) with
	$S^{\alpha}_i = ACC_i'(kb_i^{\alpha})$
	and where, for finite $\kappa$ and $\alpha \geq 0$ we have
	
	\noindent\tbl\(
	kb_i^{\alpha+1} = \left\{\begin{array}{l}
	 mng_i(\mathit{app}(\vec{S}^{\alpha}),kb_i^{\alpha})\mbox{\ if $\alpha$ $<$ $\kappa$,}\\
	 kb_i^{\alpha} \mbox{\ otherwise}
	\end{array}\right.
	\)
	
	\medskip\noindent
	while if $\kappa = \infty$ we put \(kb_i^{^{\infty}} = \bigcup_{\alpha \geq 0} kb_i^{\alpha}\)
	
		\end{description}
	
\end{definition}

Differently from \cite{BrewkaE07}, 
the computation of a new data state element is provided here according to mMCSs,
and thus involves the application of the management function to the present knowledge base so as to obtain a new one.
Such data state element is then the unique set of consequences
of the new knowledge base, as computed by the $ACC_i'$ function.

The result can be an equilibrium only if the specified grade is sufficient to account
for all potential bridge-rules applications. In the terminology of \cite{BrewkaE07}
it would then be a \emph{grounded equilibrium}, as it is computed iteratively and deterministically from the contexts' initial knowledge bases. We have the following.

\begin{definition}
	Let $M = (C_1, \ldots ,C_{\ell})$ be a monotonic mMCS with no negative literals in bridge-rule bodies. 
	A belief state $\vec{S} = (S_1, \ldots, S_{\ell})$ is a grounded equilibrium of grade $\kappa$
	of $M$ iff $ACC'_i(mng_i(\mathit{app}(\vec{S}),kb^{\kappa}_i) = {S_i}$, for $1 \leq i \leq {\ell}$.	
\end{definition}

\begin{proposition}
Let $M = (C_1, \ldots ,C_{\ell})$ be a monotonic mMCS with no negative literals in bridge-rule bodies. 
A belief state $\vec{S} = (S_1, \ldots, S_{\ell})$ is a grounded equilibrium of grade $\infty$ (or simply 'a grounded equilibrium') for $M$ iff $\vec{S}$ is an equilibrium for $M$.
\end{proposition}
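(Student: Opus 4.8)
The proposition asserts that two classes of belief states coincide, and the quickest route is to recognise both as fixed points of operators of the same shape. In the monotonic setting each $ACC_i$ is single-valued, i.e.\ $ACC_i(\hat{kb})=\{ACC'_i(\hat{kb})\}$, so $S_i\in ACC_i(\hat{kb})$ iff $S_i=ACC'_i(\hat{kb})$; consequently the equilibria of $M$ are exactly the fixed points of the operator $T$ with $T(\vec{S})_i=ACC'_i(mng_i(\mathit{app}(\vec{S}),kb_i))$, and the grounded equilibria of grade $\infty$ are exactly the fixed points of $T_\infty$ with $T_\infty(\vec{S})_i=ACC'_i(mng_i(\mathit{app}(\vec{S}),kb^{\infty}_i))$ (here I use the standing convention that grounding preserves consequences, $ACC'_i(gr(kb_i))=ACC'_i(kb_i)$, so that $kb_i$ and $kb^{0}_i$ of Definition~\ref{computedatastate} are interchangeable inside $ACC'_i$). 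The plan is to show $\mathrm{Fix}(T)=\mathrm{Fix}(T_\infty)$ by exhibiting a common element and then proving uniqueness on both sides.

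First I would do the monotonicity bookkeeping. Because bridge-rule bodies are negation-free, $\vec{S}\mapsto\mathit{app}(\vec{S})$ is monotone w.r.t.\ componentwise inclusion: a larger belief state supplies more ground terms, hence more ground instances in the sense of Definition~\ref{groundbr}, and makes more bodies succeed, never fewer. Combined with monotonicity of each $mng_i$ and of $ACC'_i$, an induction on $\alpha$ shows the canonical chain $kb^{0}_i\subseteq kb^{1}_i\subseteq\cdots$ of Definition~\ref{computedatastate} is ascending, hence so is $\vec{S}^{\alpha}$, and $kb^{\infty}_i=\bigcup_{\alpha\geq 0}kb^{\alpha}_i$ is well defined, with $\vec{S}^{\infty}:=(ACC'_1(kb^{\infty}_1),\dots,ACC'_\ell(kb^{\infty}_\ell))$. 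Since each ground bridge-rule instance mentions only finitely many terms and finitely many body atoms, $\mathit{app}$ is moreover continuous along this chain: $\mathit{app}(\vec{S}^{\infty})=\bigcup_{\alpha}\mathit{app}(\vec{S}^{\alpha})$.

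Next I would verify that $\vec{S}^{\infty}$ lies in both $\mathrm{Fix}(T_\infty)$ and $\mathrm{Fix}(T)$. For the former, passing to the limit (using continuity of $mng_i$) in $kb^{\alpha+1}_i=mng_i(\mathit{app}(\vec{S}^{\alpha}),kb^{\alpha}_i)$ yields $kb^{\infty}_i=mng_i(\mathit{app}(\vec{S}^{\infty}),kb^{\infty}_i)$, whence $T_\infty(\vec{S}^{\infty})_i=ACC'_i(kb^{\infty}_i)=\vec{S}^{\infty}_i$. For the latter, $kb^{\infty}_i$ differs from $gr(kb_i)$ only by formulas that $mng_i$ produces from bridge rules applicable in the intermediate belief states, all of which are applicable in $\vec{S}^{\infty}$ by the continuity just noted; hence $mng_i(\mathit{app}(\vec{S}^{\infty}),kb_i)$ has the same $ACC'_i$-image as $mng_i(\mathit{app}(\vec{S}^{\infty}),kb^{\infty}_i)$, namely $\vec{S}^{\infty}_i$, so $T(\vec{S}^{\infty})=\vec{S}^{\infty}$. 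Uniqueness then closes the argument: if $\vec{S}\in\mathrm{Fix}(T)$, an induction on $\alpha$ gives $\vec{S}^{\alpha}\leq\vec{S}$ componentwise (base: $ACC'_i(gr(kb_i))=ACC'_i(kb_i)\subseteq T(\vec{S})_i=S_i$; step: monotonicity of $\mathit{app}$, $mng_i$, $ACC'_i$ and the fixed-point equation), so $\vec{S}^{\infty}\leq\vec{S}$; conversely $\vec{S}$, being closed under exactly the bridge rules that build $kb^{\infty}_i$, satisfies $\vec{S}\leq\vec{S}^{\infty}$, whence $\vec{S}=\vec{S}^{\infty}$. The identical argument with $kb^{\infty}_i$ in place of $kb_i$ shows every element of $\mathrm{Fix}(T_\infty)$ equals $\vec{S}^{\infty}$, and the equivalence follows.

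The step I expect to be delicate is the passage to the limit and back through $mng_i$: monotonicity of $mng_i$ in the stated ``only adds formulas'' sense does not by itself guarantee Scott-continuity along the directed union $\bigcup_{\alpha}kb^{\alpha}_i$, nor the fact that re-applying $mng_i$ with the same operation set to an already processed knowledge base leaves its acceptable consequences unchanged — both of which are used above. I would therefore isolate these as a mild additional requirement on $mng_i$ (satisfied, e.g., by the logic-programming management function in the examples of Section~\ref{grounded}), and I would also make explicit, rather than silently assume, the convention that grounding does not alter acceptable consequences, so that $kb_i$ and $gr(kb_i)$ may be freely exchanged inside $ACC'_i$.
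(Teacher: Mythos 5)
Your construction of $\vec{S}^{\infty}$ and the check that it satisfies both conditions is reasonable, but the keystone of your argument --- the uniqueness claim that every fixed point of $T$ (and of $T_\infty$) equals $\vec{S}^{\infty}$ --- is false, and your equivalence does not follow without it. The induction giving $\vec{S}^{\alpha}\leq\vec{S}$ is plausible (modulo the $ACC'_i$ versus $ACC_i$ issue noted below), but the converse step, that an equilibrium $\vec{S}$ is ``closed under exactly the bridge rules that build $kb^{\infty}_i$'' and hence $\vec{S}\leq\vec{S}^{\infty}$, fails: a monotonic mMCS with cyclic bridge rules has non-minimal equilibria whose extra beliefs are supported only through the loop. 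Take a single context with $kb_1=\emptyset$, a Horn-style single-model $ACC_1$, the ground positive bridge rule $p \leftarrow (c_1\!:\!p)$, and a management function that simply adds bridge-rule heads. The construction of Definition~\ref{computedatastate} yields $kb_1^{\infty}=\emptyset$ and $\vec{S}^{\infty}$ with empty first component, yet the state containing $p$ is also an equilibrium, and it also satisfies the grade-$\infty$ condition because $kb_1^{\infty}=kb_1$. So neither $\mathrm{Fix}(T)$ nor $\mathrm{Fix}(T_\infty)$ is a singleton, and your closing step (``every element of $\mathrm{Fix}(T_\infty)$ equals $\vec{S}^{\infty}$'') collapses; indeed the existence of such ungrounded equilibria is exactly why Brewka and Eiter single out the grounded equilibrium as the \emph{minimal} equilibrium of a definite MCS.

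A repaired argument must compare the two defining conditions directly for an arbitrary $\vec{S}$: e.g., for one direction, show that the formulas in $kb_i^{\infty}\setminus kb_i$ are heads of rules already in $\mathit{app}(\vec{S})$ (using $\vec{S}\geq\vec{S}^{\infty}$ and monotonicity of $\mathit{app}$ on negation-free rules), so that $mng_i(\mathit{app}(\vec{S}),kb_i^{\infty})$ and $mng_i(\mathit{app}(\vec{S}),kb_i)$ have the same acceptable consequences --- which, as you yourself observe, needs assumptions on $mng_i$ beyond ``only adds formulas''; and one must also reconcile $S_i\in ACC_i(\cdot)$ with $S_i=ACC'_i(\cdot)$. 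Your opening premise that $ACC_i$ is single-valued sidesteps the latter but contradicts the paper, which explicitly allows several grounded equilibria ``depending upon the choice of $ACC_i'$''. Note that the paper's own proof is far more modest than your plan: it treats $\vec{S}$ as the state produced by the iterative construction and simply observes that after $\infty$ steps no applicable bridge rule remains unapplied, so that state is stable under bridge-rule application; it does not attempt, and your route cannot deliver, a uniqueness-based identification of the two classes.
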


\begin{proof}
$\vec{S}$ is an equilibrium for $M$ because it fulfills the definition by construction. In fact, step-by-step all applicable bridge rules will have been applied, so $\vec{S}$, obtained via $\infty$ steps, is stable w.r.t. bridge rule application. \qed
\end{proof}

Notice that reaching a grounded equilibrium of grade $\infty$ does not always require an infinite number of steps: the procedure of Definition\rif{computedatastate} can possibly reach a fixpoint in a number $\delta$ of steps, where either $\delta = \infty$ or $\delta$ is finite.
In fact, the required grade for obtaining an equilibrium would be $\kappa = \infty$ in the former version of the example,
where in the latter version if setting threshold $t$ we would have $\kappa = t$.

Several grounded equilibria may exist, depending upon the choice of $ACC_i'$. We can state the following relationship with \cite{BrewkaE07}:

\begin{proposition}
	Let $M = (C_1, \ldots ,C_{\ell})$ be a definite MCS (in the sense of \cite{BrewkaE07}), and let
	$\vec{S} = (S_1, \ldots, S_{\ell})$ be a grounded equilibrium for $M$ according to their definition. Then, $\vec{S}$ is a grounded equilibrium of the mMCS $M$'obtained by including in $M$ the same contexts as in $M$, and, for each context $C_i$, letting $ACC_i'=ACC_i$, and associating to $C_i$ a management function~$mng_i$ 
	that just adds to $kb_i$ every $s$ such that $o(s) \in \mathit{app}(\vec{S})$.
\end{proposition}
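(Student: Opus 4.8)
The plan is to show that the iterative construction of Definition~\ref{computedatastate}, when run to grade $\infty$, reproduces exactly the fixpoint-style characterization of the grounded equilibrium given in \cite{BrewkaE07} for definite MCSs, once we instantiate the mMCS parameters as stated. First I would unfold the hypotheses: in a definite MCS each context has a monotonic $ACC_i$ admitting a single set of consequences, so taking $ACC_i' = ACC_i$ is well-defined and the choice made in Definition~\ref{computedatastate} is forced; moreover bridge-rule bodies contain only positive literals and heads are atoms, matching the simplifying assumptions already adopted. With $mng_i$ defined to simply add to $kb_i$ every $s$ with $o(s) \in \mathit{app}(\vec{S})$, the recurrence for $kb_i^{\alpha+1}$ in Definition~\ref{computedatastate} becomes $kb_i^{\alpha+1} = kb_i^\alpha \cup \{\, s \mid o(s)\in\mathit{app}(\vec{S}^\alpha)\,\}$, i.e.\ exactly the monotone operator whose iteration defines the grounded equilibrium of \cite{BrewkaE07}, except that here grounding of the bridge rules is re-done at each stage against the terms present in $\vec{S}^\alpha$ (Definition~\ref{groundbr}).

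The key step is therefore to argue that this staged re-grounding does not change the limit. I would observe that since every context is monotonic and $mng_i$ only adds formulas, the sequence $kb_i^0 \subseteq kb_i^1 \subseteq \cdots$ is increasing, hence so is $\vec{S}^0 \subseteq \vec{S}^1 \subseteq \cdots$ (componentwise, using monotonicity of $ACC_i$); consequently the set of terms occurring in $\vec{S}^\alpha$ is nondecreasing in $\alpha$, and any ground instance of a bridge rule that is ever admissible remains admissible at all later stages. Because we start $\vec{S}^0$ from $gr(kb_i)$ grounded over all constants occurring in any $kb_j$ — which is precisely the (Herbrand-style) domain over which \cite{BrewkaE07} grounds the bridge rules of a definite MCS in the first place — every bridge-rule instance relevant to their construction is already available from stage $0$ onward. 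Hence the union $kb_i^\infty = \bigcup_{\alpha\geq 0} kb_i^\alpha$ coincides with the least fixpoint of the \cite{BrewkaE07} operator, and applying $ACC_i'=ACC_i$ to it gives $S_i$. It then remains to check the equilibrium condition of the new definition, $ACC_i'(mng_i(\mathit{app}(\vec{S}),kb_i^\infty)) = S_i$: this follows because at the fixpoint $\mathit{app}(\vec{S})$ contributes only heads already in $kb_i^\infty$, so $mng_i$ leaves $kb_i^\infty$ unchanged, and $ACC_i(kb_i^\infty) = S_i$ by construction — i.e.\ $\vec{S}$ is a grounded equilibrium of grade $\infty$ of $M'$, which by the earlier Proposition is the same as being an equilibrium.

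The main obstacle I anticipate is the bookkeeping around grounding domains: one must be careful that the domain over which \cite{BrewkaE07} grounds (their fixed, typically finite relational domain of constants) and the domain generated dynamically in Definition~\ref{computedatastate} (which may introduce function-symbol terms via $succ$-like rules and hence grow) agree on all instances that actually fire in the definite-MCS computation. In the definite setting of \cite{BrewkaE07} no value invention occurs through bridge rules in the way the non-ground examples exhibit, so the dynamic domain should stabilize to the static one on the relevant instances; making this precise — essentially an induction on $\alpha$ showing $\mathit{app}(\vec{S}^\alpha)$ in $M'$ equals the $\alpha$-th iterate of the \cite{BrewkaE07} operator — is the crux, and everything else (monotonicity, the shape of $mng_i$, the final equilibrium check) is routine. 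I would also note in passing that the converse inclusion, that every grounded equilibrium of $M'$ arises this way, is immediate from the deterministic nature of the construction once $ACC_i'$ is fixed to $ACC_i$.
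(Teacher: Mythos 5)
Your proposal is correct and takes essentially the same route as the paper: both arguments reduce the claim to showing that, with $ACC_i'=ACC_i$ and the purely additive $mng_i$, the iteration of Definition~\ref{computedatastate} coincides step by step with the fixpoint procedure of \cite{BrewkaE07}. The paper's proof is just the one-clause observation that all bridge rules of a definite MCS are already ground, which makes the re-grounding issue you single out as ``the crux'' vacuous, so your extra bookkeeping about grounding domains (and the slight misstatement that \cite{BrewkaE07} grounds bridge rules over a domain of constants --- their bridge rules are ground by definition) is harmless but unnecessary.
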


\begin{proof}
As all the bridge rules in both $M$ and $M'$ are ground, the procedure of Definition\rif{computedatastate} and the procedure described on page~4 (below Definition~11) in~\cite{BrewkaE07} become identical, as we added only two aspects, i.e., considering non-ground bridge rules, and considering a management function, which are not applicable to $M'$. \qed \end{proof}

In \cite{BrewkaE07}, where the authors consider \emph{ground} bridge rules only, they are able to transfer the concept of grounded equilibrium of grade~$\infty$ of a monotonic MCS $M$ to its extensions, where an extension is defined below. Intuitively, an extension $M'$ of (m)MCS $M$ has the same number of contexts, each context in $M'$ has the same knowledge base of the corresponding context in $M$, but, possibly, a wider set of bridge rules; some of these bridge rules, however, \emph{extend} those in $M$ as they agree on the positive bodies.

\begin{definition}
	Let $M = (C_1, \ldots ,C_{\ell})$ be a monotonic (m)MCS with no negative literals in bridge-rule bodies. A monotonic (m)MCS $M' = (C'_1, \ldots ,C'_{\ell})$ is an \emph{extension} of $M$ iff for each $C_i = \tuple{c_i,L_i,kb_i,br_i,OP_i,mng_i}$, we have that $C'_i = \tuple{c_i,L_i,kb_i,br'_i,OP_i,mng_i}$ where $\rho \in br_i$ implies $\exists\rho' \in br_i'$ where $body^+(\rho) = body^+(\rho')$ and $body^-(\rho')$ may be nonempty. We call $\rho$ and $\rho'$ \emph{corresponding} bridge rules.
\end{definition}

In \cite{BrewkaE07} it is stated that a grounded equilibrium $\vec{S} = (S_1, \ldots, S_{\ell})$ of grade~$\infty$ of a definite MCS $M$ is a grounded equilibrium of grade~$\infty$ of any extension $M'$ of $M$ that can be obtained from $M'$ by: canceling all bridge rules where $\vec{S}$ does not imply the negative body, and canceling the negative body of all remaining bridge rules. We may notice that \cite{BrewkaE07} proceeds from $M'$ to $M$ via a reduction similar to the Gelfond-Lifschitz reduction \cite{GelLif88}.
In our case, we assume to extend the procedure of Definition\rif{computedatastate} by dropping the assumption of the absence of negation in bridge-rule bodies. So, the procedure will now be applicable to monotonic MCSs in general.
Then, we have that: 

\begin{proposition}
	Let $M = (C_1, \ldots ,C_{\ell})$ be a monotonic (m)MCS with no negative literals in bridge-rule bodies, and let $M'$ be an extension of $M$. Let
	$\vec{S} = (S_1, \ldots, S_{\ell})$ be a grounded equilibrium for $M$,
	reachable in $\delta$ steps. $\vec{S}$ is a grounded equilibrium for $M'$ iff, if applying the procedure of Definition\rif{computedatastate} to both $M$ and $M'$, we have that $\forall \alpha > 0$, $\rho \in \mathit{app}(\vec{S}^{\alpha})$ iff $\rho' \in \mathit{app}(\vec{S'}^{\alpha})$, where $\rho$ and $\rho'$ are corresponding bridge rules $\vec{S}$ and $\vec{S'}$ are the data states at step $\alpha$ of $M$ and $M'$, respectively.
\end{proposition}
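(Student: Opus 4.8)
The plan is to run the iterative construction of Definition~\ref{computedatastate}---taken in the version extended to negative bridge-rule bodies, as stipulated just above the statement---on $M$ and on $M'$ \emph{in parallel}, using the same choice of each $ACC_i'$ on both sides; this is legitimate because $M$ and $M'$ share all logics, knowledge bases, management bases and management functions, differing only in their bridge rules. Write $\vec{S}^{\alpha},kb_i^{\alpha}$ for the data states and knowledge bases produced for $M$, and $\vec{S'}^{\alpha}$ for the data states produced for $M'$. Since the two systems have the same $kb_i$'s, hence the same constants, $gr(kb_i)$ is the same on both sides, so $\vec{S}^{0}=\vec{S'}^{0}$ and the step-$0$ knowledge bases agree; this is the base of every induction below. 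The point to keep in mind is that the two runs, differing only through their bridge rules, can diverge at a step $\alpha$ only if the set of bridge-rule heads actually applied there differs, and that corresponding rules $\rho,\rho'$ share head and positive body, with $body^-(\rho)=\emptyset\subseteq body^-(\rho')$.

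For the direction from right to left, assume the co-applicability condition and prove by induction on $\alpha$ that $\vec{S}^{\alpha}=\vec{S'}^{\alpha}$ and the step-$\alpha$ knowledge bases coincide. For the inductive step, the hypothesis gives $\vec{S}^{\alpha}=\vec{S'}^{\alpha}$; since corresponding rules have the same head and a ground instance of $\rho\in br_i$ is applicable in $\vec{S}^{\alpha}$ exactly when the corresponding ground instance of $\rho'\in br'_i$ is applicable in $\vec{S'}^{\alpha}$, the sets $\mathit{app}(\vec{S}^{\alpha})$ and $\mathit{app}(\vec{S'}^{\alpha})$ of applied heads coincide; applying the common $mng_i$ to equal arguments gives equal new knowledge bases, and $ACC_i'$ then yields $\vec{S}^{\alpha+1}=\vec{S'}^{\alpha+1}$. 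Hence the two runs are identical at every step, so the $M'$-run reaches the same $\vec{S}$ (after $\delta$ steps) and produces the same $kb_i^{\kappa}$; plugging this into the grounded-equilibrium equation of $M'$ and using, once more, the equality of the $\mathit{app}$ sets at $\vec{S}$ together with the fact that $\vec{S}$ is a grounded equilibrium of $M$, we obtain $ACC_i'(mng_i(\mathit{app}(\vec{S}),kb_i^{\kappa}))=S_i$ for all $i$, i.e.\ $\vec{S}$ is a grounded equilibrium of $M'$.

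For the direction from left to right, I would argue by contraposition: assume the co-applicability condition fails and let $\alpha$ be the least step at which some corresponding pair $\rho,\rho'$ witnesses a failure. By the inductive argument above the two runs agree up to step $\alpha$, so $\vec{S}^{\alpha}=\vec{S'}^{\alpha}$, but the sets of applied heads differ at step $\alpha$; since $mng_i$ only adds formulas and $ACC_i'$ is monotone, the resulting discrepancy between the $M$- and $M'$-knowledge bases at step $\alpha+1$ cannot be undone later, so the $M'$-run stabilises on knowledge bases different from the $kb_i^{\kappa}$ of the $M$-run, and one checks that $\vec{S}$---which was tied, through $kb_i^{\kappa}$, precisely to $M$---then fails the grounded-equilibrium equation of $M'$.

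The hard part is this last step. Negation interacts badly with the monotone construction: satisfaction of a negative body is not monotone along the run (it may hold early and fail later), so applicability of a bridge rule is not monotone in $\alpha$, and a head dropped by one corresponding rule may re-enter $\mathit{app}$ through another rule of $M'$ bearing the same head; turning ``the discrepancy cannot be undone'' into a proof therefore requires tracking which atoms enter each $kb_i^{\alpha}$ and through which rule. It is exactly here that the stated hypothesis does the work: given that the $M$-run's data states grow monotonically up to $\vec{S}$, the condition ``$\rho$ applicable in $\vec{S}^{\alpha}$ iff $\rho'$ applicable in $\vec{S'}^{\alpha}$ for all $\alpha>0$'' is equivalent to saying that $\vec{S}$ violates the negative body of no ever-applicable bridge rule of $M'$, i.e.\ that the reduct of $M'$ with respect to $\vec{S}$ (in the Gelfond--Lifschitz style used for extensions in \cite{BrewkaE07}) keeps the relevant rules---the classical sufficient condition for the grounded equilibrium to transfer. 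A fully careful write-up should additionally settle step $0$ (where the two data states coincide trivially but co-applicability of a corresponding rule with nonempty negative body is not automatic), the case $\delta=\infty$, and the precise reading of ``corresponding'' when $br'_i$ contains several rules with the same head.
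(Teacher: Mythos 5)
Your right-to-left direction is exactly the argument the paper gives: its entire proof is the one-sentence observation that, under the stated condition, at each step corresponding bridge rules are applied in $M$ and $M'$ on the same knowledge bases, so at step $\infty$ the same belief state is computed, ``a proof by induction is thus straightforward'' --- your parallel induction with a common choice of $ACC_i'$ is just that induction spelled out. The gap you flag in the left-to-right direction is genuine, but it is not a gap relative to the paper: the paper's proof simply does not argue that direction (nor does it address the points you raise about step $0$, the case $\delta=\infty$, extra rules of $M'$ that correspond to nothing in $M$, or several corresponding rules sharing a head --- note also that the paper's definition of ``extension'' only requires equal positive bodies and never formally states that corresponding rules share heads, which both you and the paper tacitly assume). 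So you have not missed an idea that the paper supplies; if anything, your write-up is more candid than the published proof about where the claimed ``iff'' would need additional work, and your observation that non-monotone applicability of negative bodies is the obstacle is exactly the right diagnosis of why the converse is not ``immediate.''
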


\begin{proof}
The result follows immediately from the fact that at each step corresponding bridge rules are applied in $M$ and $M'$ on the same knowledge bases, so at step $\infty$ the same belief state will have been computed. A proof by induction is thus straightforward. \qed
\end{proof}

\section{Update Problem: Update Operators and Timed Equilibria}
\label{update}

Bridge rules as defined in mMCSs are basically a \emph{reactive} device, as a bridge rule is applied whenever applicable.
In dynamic environments, however,
a bridge rule in general will not be
applied only once, and it does not hold that an equilibrium, once reached, lasts forever. 
In fact, in recent extensions to mMCS, contexts are able to incorporate new data items coming from observations,
so that a bridge rule can be, in principle, \emph{re-evaluated} upon new
observations.
For this purpose, two main approaches have been proposed for this type of mMCSs:
so-called \emph{reactive}~\cite{BrewkaEP14} and \emph{evolving}~\cite{GoncalvesKL14b} multi-context systems.
A \emph{reactive} MCS (rMCS) is an mMCS where the system is supposed to be equipped with a set of sensors that can provide observations from a set $Obs$.
Bridge rules can refer now to a new type of atoms of the form $o@s$, being $o \in Obs$ an observation that can be provided by sensor $s$.
A \as run'' of such system starts from an equilibrium and consists in a sequence of equilibria
induced by a sequence of sets of observations.
In an \emph{evolving} MCS (eMCS), instead, there are special \emph{observation contexts}
where the observations made over time may cause changes in each context knowledge base.
As for the representation of time, different solutions have been proposed. 
For instance, \cite{BrewkaEP14} defines a special context whose belief sets on a run specify a time sequence.
Other possibilities assume an \as objective'' time provided by a particular sensor, or a special
head predicate in bridge rules whose unique role is adding timed information to a context.

However, in the general case of dynamic environments,
contexts can be realistically supposed to be able to
incorporate new data items in several ways, including interaction with
a user and with the environment.
We thus intend to propose extensions to explicitly take into account not only observations 
but, more generally, the interaction 
of contexts with an external environment. Such an interaction needs not to be limited to bridge rules,
but can more generally occur as part of the context's reasoning/inference process.
We do not claim that the interaction with an external environment \emph{cannot} somehow be expressed
via the existing approaches to MCSs which evolve in time \cite{BrewkaEP14,GoncalvesKL14b,GoncalvesKL14a};
however, it cannot be expressed \emph{directly}; in view of practical applications, we introduce explicit and natural devices to cope with this important issue.

We proceed next to define a new extension called \emph{timed} MCS (tmMCS).
In a tmMCS, the main new feature is the idea of \emph{(update) action}.
For each context $C_i$, we define a set $\act_i$ of \emph{elementary actions}, 
where each element $\pi \in \act_i$ is the name of some action or operation that can be performed to update context $C_i$.
We allow a subset $Obs_i \subseteq act_i$ for observing sensor inputs as in~\cite{BrewkaEP14}.
A \emph{compound action} (\emph{action}, for short) $\Pi_i$ is a set $\Pi_i \subseteq \act_i$ of elementary actions that can be simultaneously applied for context $C_i$.
The application of an action to a knowledge base is ruled by an \emph{update function}:
$$\mathit{{\cal U}_i}: KB_i \times 2^{\act_i}  \rightarrow 2^{KB_i} \setminus \{\emptyset\},$$
so that $kb'_i \in \U_i(kb_i,\Pi_i)$ means that the ``new'' knowledge base $kb'_i$ is a possible result of updating $kb_i$ with action $\Pi_i$ under function $\U_i$.
We thus assume ${\cal U}_i$ to encompass all possible updates performed to a module
(including, for instance, sensor inputs, updates performed by a user,
consequences of messages from other agents or changes determined by the context's self re-organization).
Note that $\U_i(kb_i,\Pi_i)$ is (possibly) non-deterministic, but never empty (some resulting $kb'_i$ is always specified).
In some cases, we may even leave the knowledge base unaltered, since it is admitted that $kb_i \in \U_i(kb_i,\Pi_i)$.
Notice, moreover, that update operations can be non-monotonic.

In order to allow arbitrary sequences of updates, we assume a linear time represented by a sequence of \emph{time points} $\{t_T\}_{T\geq 0}$ indexed by natural numbers $T \in \mathbb{N}$ and representing discrete states or instants in which the system is updated.
The elapsed time between each pair of states is some real quantity $\delta=t_{\T1}-t_T>0$.
Given $T \in \mathbb{N}$ and context $C_i$, we write $kb_i[T]$ and $\Pi_i[T]$ to respectively stand for the knowledge base content at instant $T$
 and the action performed to update $C_i$ from instant $T$ to $\T1$.
We define the \emph{actions vector} at time $T$ as {\protect${\boldmath\Pi}$}$[T]=(\Pi_1[T],\dots,\Pi_{\ell}[T])$. 
Finally, $S_i[T]$ denotes the set of beliefs for context $C_i$ at instant $T$ whereas,
accordingly, $\vec{S}[T]$~denotes the belief state $(S_1[T],\dots,S_{\ell}[T])$.
$\vec{S}[T]$ can be indicated simply as $\vec{S}$ whenever $T$ does not matter.

\begin{definition}[timed context]\label{evolvcontexts}
Let $C_i = \tuple{c_i,L_i,kb_i,br_i,OP_i,mng_i}$ be a context in an mMCS.
The corresponding \emph{timed} context w.r.t.\ an initial belief state $\vec{S}$ is
defined as:
\begin{itemize}[ topsep=1pt, itemsep=0pt]
\item $C_i[0] = \tuple{c_i,L_i,kb_i[0],br_i,OP_i,mng_i,\act_i,{\cal U}_i}$
\item $C_i[\T1] = \tuple{c_i,L_i,kb_i[\T1],br_i,OP_i,mng_i,\act_i,{\cal U}_i}$, for $T \geq 0$
\end{itemize}
where
$kb_i[0] := kb_i$ and $\vec{S}[0]:=\vec{S}$, whereas
$kb_i[\T1]{:=}mng_i(\mathit{app}(\vec{S}[T]),kb')$ and $kb' \in {\cal U}_{i}(kb_{i}[T],\Pi_i[T])$.
\end{definition}

\begin{definition}
	We let a tmMCS at time $T$ be $M[T] = \{C_1[T],\ldots,C_{\ell}[T]\}$.
\end{definition}

The initial timed belief state $\vec{S}[0]$ can possibly be an equilibrium, according to original mMCS definition.
Later on, however, the transition from a timed belief state to the next one, and consequently the definition of an equilibrium,
is determined both by the update operators and by the application of bridge rules. Therefore:
\begin{definition}[timed equilibrium]
	A timed belief state of tmMCS $M$ at time $\T1$ is a \emph{timed	equilibrium} iff, for $1 \leq i \leq {\ell}$ it holds that~
	{\(S_i[\T1] \in ACC_i(mng_i(\mathit{app}(\vec{S}[\T1]),kb_i[\T1])\).} 
\end{definition}

The meaning is that a timed equilibrium is now a data state which encompasses bridge rules applicability
on the updated contexts' knowledge bases. 
As seen in Definition\rif{evolvcontexts}, applicability is checked on belief state $\vec{S}[T]$, but bridge rules are applied (and their results incorporated by the management function) on the knowledge base resulting from the update.
The enhancement w.r.t.~\cite{BrewkaEP14} is that the
management function keeps its original role concerning bridge rules, while the update operator
copes with updates, however and wherever performed. 
So, we relax the limitation that each rule involving an update should be
a bridge rule, and that updates should consist of (the combination and elaboration of) simple atoms occurring in bridge bodies.
Our approach, indeed, allows update operators to consider and incorporate any piece of knowledge; for instance, an update can encompass table creation and removal in a context which is a relational database or addition of machine-learning results in a context which collects and processes big data.
In addition, we make time explicit thus showing the timed evolution of contexts and equilibria,
while in \cite{BrewkaEP14} such an evolution is implicit in the notion of system run.

A very important recent proposal to introducing time in mMCS is that
of Streaming MCS (sMCS) \cite{EiterSMCS17}.
The sMCS approach equips MCS with data streams
and models the time needed to transfer data among contexts, and
computation time at contexts. The aim is to model the asynchronous behavior
that may arises in realistic MCS applications. In sMCS, bridge rules can employ
\emph{window atoms} to obtain 'snapshots' of input streams from other contexts.
Semantically, \as feedback equilibria'' extend the notion of MCS equilibria
to the asynchronous setting, allowing local stability in
system \as runs'' to be enforced,
overcoming potentially infinite loops.
To define window atoms the authors exploit the LARS framework \cite{Beck2015},
which allows to define \as window functions'' to specify
\emph{timebased windows (of size k)} out of a data stream $S$.
A \emph{Window atom} $\alpha$ are expressed on a plain atom $A$ and
can specify that $A$ is true at some
time instant or sometimes/always within a time window.
Window atoms may appear in bridge rules: a literal $c_i:\alpha$ in the body
of some bridge rule, where $\alpha$ is a window atom involving atom $A$,
means that the \as destination''
context (the context to which the bridge rule belongs) is enabled to \emph{observe}
$A$ in context $c_i$ during given time window.
This models in a natural way the access to sensors, i.e., cases where $c_i$
is a fictitious context representing a sensor, whose outcomes are by
definition observable. However, window atoms may concern other kinds
of contexts, which are available to grant observability of their belief state.
The sMCS approach also consider a very important aspect, namely that knowledge
exchange between context is not immediate (as in the basic, rather ideal,
MCS definition) but instead takes a time, that is supposed to be bounded,
where bounds associated to the different contexts are known in advance.
So, the approach considers the time that a context will take to evaluate
bridge rules and to compute the management function.
Thus, `runs' of an sMCS will consider that each context's knowledge base
will actually be updated according to these time estimations.

The sMCS proposal is very relevant and orthogonal to ours. We in fact intend as future work to devise an integration of sMCS with our approach. In the next section we consider aspects not presently covered in sMCS, i.e: how to activate bridge rule evaluation only when needed, and how to cope with the case where the time taken by knowledge interchange between contexts is not known in advance.

\section{Static Set of Bridge Rules: Bridge-Rules Patterns}\label{RulePatterns}

In the original MCS definition, each context is equipped with a static set of bridge rules, a choice that in tmMCSs can be a limitation. In fact, contexts to be queried are not necessarily fully known in advance; rather, it can become known only at \as run-time'' which are the specific contexts to be queried in the situations that practically arise. For instance, in \cite{CostantiniF16a} we have proposed an  Agent Computational Environment (ACE), that is actually a tmMCS including an agent, to model a personal assistant-agent assisting a prospective college student; the student has to understand to which universities she could send an application (given the subjects of interest, the available budget for applications and then for tuition, and other preferences), where to perform the tests, where to enroll among the universities that have accepted the application. So, universities, test centers etc. have to be retrieved dynamically (as external contexts) and then suitable bridge rules to interact with such contexts must be generated and executed; in our approach, such bridge rules are obtained exactly by instantiating bridge rules patterns. See Section\rif{dynamic} below for a generalization of mMCS and tmMCS to dynamic systems, whose definition in term of composing contexts can change in time.

Thus, below we generalize and formalize for mMCS and tmMCS the enhanced form of bridge rules proposed in~\cite{CosForEMAS16} for logical agents.
In particular, we replace context names in bridge-rule bodies with special terms, called \emph{context designators},
which are intended to denote a specific \emph{kind} of context.
For instance, a literal such as 
\\\centerline{$\mathit{doctorRoss\,{:}\,prescription(disease,P)}$}
 asking family doctor,
specifically Dr.~Ross, for a prescription related to a certain disease, might become
\\\centerline{$\mathit{family\_doctor(d)\,{:}\,prescription(disease,P)}$}
where the doctor to be consulted is not explicitly specified.
Each context designator must be substituted by a context name prior to bridge-rule activation;
in the example, a doctor's name to be associated and substituted to the context designators $\mathit{family\_doctor(d)}$ (which, therefore, acts as a placeholder)
must be dynamically identified; the advantage is, for instance, to be able to flexibly consult
the physician who is on duty at the actual time of the inquiry.

\begin{definition}
Let $C_i$ be a context of a (t)mMCS.
A \emph{context designator} $m(k)$ is a term where $m$ is a fresh function symbol and $k$ a fresh
constant.\footnote{Meaning that $m$ and $k$ belong to the signature of $L_i$, but they do not occur in $kb_i$ or in $br_i$.}

A \emph{bridge-rule pattern} $\phi$  is an expression of the form:
$$
s \ar ({\cal{C}}_1{:}p_1), \ldots, ({\cal{C}}_j{:}p_j), \no ({\cal{C}}_{j+1}{:}p_{j+1}), \ldots, \no ({\cal{C}}_m{:}p_m)
$$
where each ${\cal{C}}_d$ can be either a constant or a context designator.
\end{definition}

New bridge rules can thus be obtained and applied by replacing, in a bridge-rule pattern, context designators via actual contexts names.
So, contexts will now evolve also in the sense that they may increase their set of bridge rules by exploiting bridge-rule patterns:
\begin{definition}
	Given a (t)mMCS, each of the timed composing contexts $C_i$, $1 \leq i \leq {\ell}$,
is defined, at time $0$, as $C_i[0] = \tuple{c_i,L_i,kb_i[0],br_i,brp_i,OP_i,mng_i,\act_i,{\cal U}_i}$,
where $brp_i$ is a set of bridge-rule patterns, and all the other elements are as in Definition\rif{evolvcontexts}.
\end{definition}

\begin{definition}[rule instance]
An \emph{instance} of the bridge-rule pattern $\phi \in brp_i$, for $1 \leq i \leq {\ell}$,
occurring in an (t)mMCS, is a bridge rule $r$ obtained by substituting every context designator occurring in $\phi$
by a context name $c \in \{c_1,\ldots,c_{\ell}\}$.
\end{definition}

The context names to replace a context designator must be established by suitable reasoning in context's knowledge base.
\begin{definition}
Given a (t)mMCS $M$, any composing context $C_i$, for $1 \leq i \leq {\ell}$, and a timed data state $\vec{S}[T]$ of $M$, 
a \emph{valid instance} of a bridge-rule pattern $\phi \in brp_i$ is a bridge rule $\hat{r}$ obtained by substituting every 
context designator $m(k)$ in $\phi$ with a context name $c \in \{c_1,\ldots,c_{\ell}\}$
such that $S_i[T] \models \mathit{subs_i}(m(k),c)$, 
where $\mathit{subs_i}$ is a distinguished predicate that we assume to occur 
in the signature of $L_i$.
\end{definition}
Let $\mathit{vinst}(brp_i)[T]$ denote the set of valid instances of bridge rule patterns that can be obtained at time $T$ (for $C_i$).
The set of bridge rules associated with a context increases in time with the addition of new ones
which are obtained as valid instances of bridge-rule patterns.
\begin{definition}
	Given a tmMCS, each of the timed composing contexts $C_i$, $1 \leq i \leq\ell$, is defined, at time $\T1$, as:
$$
C_i[\T1] = \tuple{c_i,L_i,kb_i[\T1],br_i[\T1],brp_i,OP_i,mng_i,\act_i,{\cal U}_i}
$$
with $br_i[\T1]{=}br_i[T]{\cup}\mathit{vinst}(brp_i)[T]$,  $br_i[0] = br_i$, and all the rest is defined as done in Definition\rif{evolvcontexts}. 
\end{definition}

All the other previously-introduced notions (namely, equilibria, bridge-rule applicability, etc.) remain unchanged.
Notice that instantiation of bridge-rule patterns corresponds to specializing bridge rules
with respect to the context which is deemed more suitable for acquiring some specific information at a certain stage of a context's operation.
This evaluation is performed via the predicate $\mathit{subs_i}(m(k),c)$ that can be defined
so as to take several factors into account, among which, for instance, trust and preferences.

Notice that a solution that might seem alternative though equivalent to ours could be that of defining \as classical'' bridge rules where each rule has a special element in the body acting as a 'guard' to establish if the rule should be actually applied. However, let us reconsider our working examples. Initially the student does not know which are the universities of interest, as this will the result of a reasoning process and of knowledge exchange with other contexts; the student may even not know which universities exist. So, in the alternative solution it would be necessary to define one bridge rule for each university in the world, where the rule has a ‘guard’ to 'enable' the rule in case it should be finally applied as the student concluded to be interested: this is unfeasible if the student does not know all universities in advance, and it is clearly unpractical. Similarly for doctors, who can retire or die, or start a new practice, or be initially unknown to the patient; so, they can hardly be all listed in advance.
The same holds for many other classes of knowledge sources that can be modeled as contexts. Our solution is in many practical cases more practical and compact. It is unprecedented in the literature and adds actual expressive power. 

This enhancement enables us to pursue the direction of \emph{dynamic} (t)mMCS, where contexts can either join or leave the system during its operation.
Such an extension has been in fact advocated since \cite{BrewkaEF11}.
The meaning is that the set of contexts which compose an mMCS may be different at different times. Here, one can see the usefulness of having 
a constant acting as the context name; in fact, bridge-rule definition does not strictly depend upon 
the composition of the mMCS, as it would be by using integer numbers.
Rather, the applicability of a bridge rule depends on the presence in the system of contexts with the names indicated in the bridge-rule definition.
Not only can contexts be added or removed, but they can be substituted by new \as versions'' with the same name.

\section{Static System Assumption and Unique Source Assumption: Dynamic (Timed) mMCSs (dtmMCS) and Multi-Source Option}
\label{dynamic}

As mentioned, in general, a heterogeneous collection of distributed sources will not necessarily remain static in time.
New contexts can be added to the system, or can be removed, or can be momentarily unavailable due to network problems. 
Moreover, a context may be known by the others only via the role(s) it assumes or the 
services which it provides within the system. Although not explicitly specified in the original MCS definition,
context \emph{names} occurring in bridge-rule bodies must represent all the necessary information for 
reaching and querying a context, e.g., names might be~URIs.

It is however useful for a context to be able
to refer to other contexts via their roles, without necessarily being explicitly aware of their names.
Also, a context which joins an MCS will not necessarily make itself
visible to every other contexts: rather, there might be specific authorizations involved.
These aspects may be modeled by means of the following extensions. 

\begin{definition}
A \emph{dynamic} managed (timed) Multi-Context System (d(t)mMCS) at time $T$ is a (t)mMCS augmented with the two special contexts $\D$ and $\R$, which have no associated bridge rules, bridge-rule patterns, and update operator, and where:
\begin{itemize}
\item
$\D$ is a \emph{directory} which contains the list of the contexts, namely $C_1,\ldots,C_{\ell}$,
participating in the system at time~$T$
where, for each $C_i$, its name is associated with its \emph{roles}. We assume $\D$ to admit queries 
of the form '$\mathit{role@Dir}$', returning the name of some context with role '$\mathit{role}$', where '$\mathit{role}$' is 
assumed to be a constant. 
\item
$\R$ contains a directed graph determining which other contexts are \emph{reachable} from each context $C_i$.
For simplicity, we may see $\R$ as composed of couples of the form $(C_r,C_s)$ meaning that context $C_s$ is (directly or indirectly) reachable from
context $C_r$. 
\end{itemize}
\end{definition}

A d(t)mMCS is a more structured system w.r.t. MCS. Via $\D$, contexts can access sources of knowledge without knowing them in advance, via a centralized system facility listing available contexts, with their role. In practice, contexts joining a d(t)mMCS will be allowed to register to $\D$ specifying their role, where this registration can possibly be optional. Via $\R$, the system assumes a structure that may correspond to the nature the applications, where a context can sometimes be not allowed to access all the others, either for a matter of security or for a matter of convenience. Often, information must be obtained via suitable mediation, while access to every information source is not necessarily either allowed or desirable. For instance, a patient's relatives can ask the doctors about the patient's conditions and prognosis, but they are not allowed (unless explicitly authorized by the patient) to access medical documentation directly. Students can see their data and records, but not those of other students.

For now, let us assume that a query $\mathit{role@Dir} = c$ where $c \in \{C_1,\ldots,C_{\ell}\}$, i.e., 
returns a unique result.
The definition of timed data state remains unchanged. 
Bridge rule syntax must instead be extended accordingly:

\begin{definition}
	\label{newbr}
	Given a d(t)mMCS (at time $T$ if timed) $M[T]$, each (non-ground) bridge rule $r$ in the composing contexts has the form:
	$$\begin{array}{l}
	s \ar ({\cal C}_1: p_1), \ldots, ({\cal C}_j : p_j),  \,\no ({\cal C}_{j+1} : p_{j+1}), \ldots,\no ({\cal C}_m : p_m).
	\end{array}
	$$
	\noindent where for $1 \leq k \leq m$ the expression ${\cal C}_{k}$ is either a context name, or an expression $\mathit{role_k@Dir}$.
\end{definition}

Bridge-rule grounding and applicability must also be revised. In fact, for checking bridge rule applicability:
(i) each expression $\mathit{role_k@Dir}$ must be substituted by its result 
and (ii) every context occurring in bridge rule body must be reachable from the context where the bridge rule occurs. 

\begin{definition}
	\label{preground}
	Let $M$ be a d(t)mMCS (at time $T$ if timed) and $\vec{S}$ be a (timed) data state for $M$. 
	Let $r$ be a bridge rule in the form specified in Definition\rif{newbr}.
	The \emph{pre-ground} version $r'$ of $r$ is obtained by substituting each expression $\mathit{role_k@Dir}$ occurring in the body of $r$
	with its result $c_k$ obtained from $\D$. 
\end{definition}

Notice that $r'$ is a bridge rule in \as standard'' form, and that $r$ and $r'$ have the same head,
where their body differ since in $r'$ all context names are specified explicitly.

\begin{definition}
	\label{contapp}
	Let $r'$ be a pre-ground version of a bridge rule $r$ occurring in context $\hat{C}$ of d(t)mMCS $M$ 
with (timed) data state $\vec{S}$.
Let $\rho$ be a ground instance w.r.t.~$\vec{S}$ of~$r'$.
We have now $hd(\rho) \in \mathit{app}(\vec{S})$ if $\rho$ fulfills the conditions for applicability w.r.t.~$\vec{S}$ and, 
	in addition, for each context $\tilde{C}$ occurring in the body of $\rho$ we have that $(\hat{C},\tilde{C}) \in \R$.
\end{definition}

The definition of equilibria is basically unchanged, save the extended bridge-rule applicability. However, suitable update operators (that we do not discuss here) will be defined for both $\D$ and $\R$, to keep both the directory and the reachability graph up-to-date with respect to the actual system state. The question may arise of where such updates might come from. This will in general depend upon the application at hand: the contexts might themselves generate an update when joining/leaving a system, or some kind of monitor (that might be one of the composing contexts, presumably however equipped with reactive, proactive and reasoning capabilities) might take care of such task. Thus, the system as a whole will have a \as policy'' that defines reachability.
Notice, in fact, that $\R$ provides structure to the system in a global way: each context indeed belongs to the sub-MCS of its reachable contexts. This corresponds to the structure of many applications and introduces a notion similar to \as views'' in databases: a context can sometimes be not allowed to access all the others, either for a matter of security or for a matter of convenience. Often, information must be obtained via suitable mediation, while access to every information source is not necessarily allowed or desirable. Reachability can evolve in time according to system's global policies.

\smallskip 

Via $\D$ contexts are categorized, independently of their names, into, e.g., universities, doctors, etc. So, a context can query the ‘right’ ones via their role. 
However, there might sometimes be the case where a specific context is not able to return a required answer,
while another context with the same role instead would. More generally, we may admit a query $\mathit{role@Dir}$ to return not just one, but possibly several results,
representing the set of contexts which, in the given d(t)mMCS, have the specified role.
So, the extension that we propose in what follows can be called a \emph{multi-source option}.
In particular, for d(t)mMCS $M[T]$, composed at time $T$ of contexts $C_1,\ldots,C_{\ell}$, the expression $\mathit{role_k@Dir}$ 
occurring in bridge rule $r \in br_s$ will now denote some nonempty set $SC_k$ $\subseteq$ ($\{C_1,\ldots,C_{\ell}\}\setminus \{C_s\}$),
indicating the contexts with the required role (where $C_s$ is excluded as a context would not in this case intend to query itself).
Technically, there will be now several pre-ground versions of a bridge rule, which differ relative to the contexts occurring in their body.

\begin{definition}
	\label{multsource}
	Let $M$ be a d(t)mMCS (at time $T$) and $\vec{S}$ be a (timed) data state for~$M$. 
	Let $r \in br_s$ be a bridge rule in the form specified in Definition\rif{newbr} occurring in context~$C_s$.
	A \emph{pre-ground} version $r'$ of $r$ is obtained by substituting each expression $\mathit{role_k@Dir}$ occurring in the body of~$r$
	with $c \in SC_k$. 
\end{definition}

Bridge-rule applicability is still as specified in Definition\rif{contapp} and the definition of equilibria is also basically unchanged.

In practice, one may consider to implement the multi-source option 
in bridge-rule run-time application by 
choosing an order for querying the contexts with a certain role
as returned by the directory. The evaluation would
proceed to the next one in case
the answer is not returned within a time-out, or if the answer is under some respect unsatisfactory (according to the management function).  

A further refinement might consist in considering, among the contexts returned by $\mathit{role@Dir}$, only the \emph{preferred} ones. For instance, among medical specialist a patient might prefer the ones who practice nearer to patient's home, or who have the best ratings according to other patients' feedback.

\begin{definition}[preferred source selection]
	\label{prefcrit}
	Given a query $\mathit{role@Dir}$ with result $SC$, a preference criterion ${\cal P}$ returns a (nonempty) ordered subset $SC^{{\cal P}} \subseteq SC$.
\end{definition}

Different preference criteria can be defined according to several factors such as trust, reliability, fast answer, and others.
Approaches to preferences in logic programming might be adapted to the present setting:
cf., among many, \cite{BrewkaEP14} and the references therein, \cite{BienvenuEtAl2010,BrewkaEtAl1010preferencesNMR} and
\cite{CF-jalgor09,CosForWCP2011}). The definition of a context will now be as follows.

\begin{definition}
	A context $C_i$ included in a d(t)mMCS (except for $\D$ and $\R$) is defined (at time $T$), as $C_i[T] = \tuple{c_i,L_i,kb_i[T],br_i[T],brp_i,OP_i,mng_i,\act_i,{\cal U}_i,{\cal P}_i}$ 
where all elements are as defined before for (t)mMCS, and ${\cal P}_i$ is a preference criterion as specified in Definition\rif{prefcrit}.
\end{definition}

Preferences have been exploited in MCS extensions in \cite{EiterW17} and \cite{PontelliMCS18}
in a very different way with respect to what is done here, to cope with relevant issue.
Both the above-mentioned approaches are however orthogonal to ours,
where the preference criterion is associated to each context, and determines
which other contexts to query given a present situation. 
\cite{PontelliMCS18} aims to reconcile the different contexts' preferences 
about some common issue (for instance, mentioning from their first example
where they apparently consider contexts as agents, whether to drink 
red or white wine if going to restaurant together). \cite{EiterW17}
copes with the situation where some constraint is violated in some context 
depending upon other contexts' results communicated via bridge rules.

More precisely, \cite{EiterW17} observes that \as As the contexts of an MCS are typically autonomous and host knowledge bases that are inherited
legacy systems, it may happen that the information exchange leads to unforeseen conclusions and
in particular to inconsistency; to anticipate and handle all such situations at design time is difficult if
not impossible, especially if sufficient details about the knowledge bases are lacking. Inconsistency
of an MCS means that it has no model'', i.e., no equilibrium. To solve the problem, they propose
to define consistency-restoring rules based upon user preferences,
specified in any suitable formalism (though in their examples they adopt 
CP-Nets \cite{BoutilierBDHP04}). This in order to choose among possible repairs, called \as diagnoses'', 
intended as modifications to bridge rules that might restore consistency. 
Such rules might be included into a special context to be added to given MCS.
The goal of \cite{PontelliMCS18} instead is \as to allow each agent (context) to express its preferences
and to provide an acceptable semantics for MCS with preferences''.
To this aim, to define contexts they adopt \as Ranked logics'', where a partial order
among acceptable belief sets is defined.

\section{Logical Omniscience Assumption and Bridge Rules Application Mechanisms}\label{sect:BRgroundApply}

\label{omniscience}

In an implemented mMCS, as remarked in \cite{BarilaroFRT13}, \as ...computing equilibria
and answering queries on top is not a viable solution.'' So, they assume a given 
MCS to admit an equilibrium, and define a query-answering procedure based upon
some syntactic restriction on bridge-rule form, and involving the application and a concept of \as unfolding'' of positive
atoms in bridge-rule bodies w.r.t.\ their definition in the \as destination'' context.
Still, they assume an open system, where every context's contents are visible to others
(save some possible restrictions). We assume instead contexts to be \emph{opaque}, i.e., that contexts' contents are accessible from the outside
only via queries.

Realistically therefore, the grounding of literals in bridge rule bodies w.r.t.\ the present
data state will most presumably be performed at run-time, whenever a bridge rule is actually applied. Such
grounding, and thus the bridge-rule result, can be obtained for instance by \as executing" or \as invoking'' literals in the
body (i.e., querying contexts) left-to-right in Prolog style. 
In practice, we allow bridge rules to have negative literals 
in their body. To this aim, we introduce a syntactic
limitation in the form of non-ground bridge rules very common in logic programming approaches.
Namely, we assume that
\emph{(i) every variable occurring in the head of a non-ground bridge rule  also occurs in some positive literal of its body;
	and (ii) in the body of each rule, positive literals occur (in a left-to-right order) before negative literals}.

So, at run-time variables in a bridge rule will be incrementally and coherently instantiated via results
returned by contexts.
Clearly, asynchronous application of bridge rules determines evolving equilibria.

As mentioned, we believe that bridge-rule application should not necessarily be reactive but rather, according to a context's own logic, other modalities of
application may exist. For instance, a doctor will be consulted not whenever the doctor is available, but only if the patient is in need. Or, an application to some institution is sent not just when all the needed documentation is available, but only if and when the potential applicant should choose to do that. Thus, in our approach the bridge rules that consult the doctor or issue the application remain in general \as passive'', unless they are explicitly triggered by suitable conditions. Moreover, the results returned by a bridge rule are not necessarily processed immediately, but only if and when the receiving context will have the wish and the need to take such results into account. For instance, the outcome of medical analyses will be considered only when a specialist is available to examine them.
So, in our approach we make bridge-rule application proactive (i.e., performed upon specific conditions) and we detach
bridge-rule application and the processing of the management function.

In the rest of this section we formalize the aspects that we have just discussed. Below, when referring to tmMCS we implicitly possibly refer to their timed dynamic version according to the definitions introduced in previous section.
We are now able to formalize the rule instantiation and application mechanism, via a suitable notion of \emph{potential applicability}. 

A slight variation of the definition of bridge-rule grounding is required, as a generalization of Definition\rif{groundbr}.

\begin{definition}\label{groundbrVaried}
	Let $r \in br_i$ be a non-ground bridge rule in a context $C_i$ of a given tmMCS $M$ with (timed) belief state $\vec{S}[T]$. 
	A ground instance $\rho$ of $r$ w.r.t.~$\vec{S}[T]$ is obtained by substituting every variable occurring in $r$ with ground
	terms occurring in~$\vec{S}[T]$.\footnote{Variables may occur in atoms $(c_j{:}p)$ in the body of $r$, in its head $o(s)$, or in both.}
\end{definition}
By a \as ground bridge rule'' we implicitly mean a ground instance of a bridge rule w.r.t.\ a timed data state.
We now redefine bridge-rule applicability, by first introducing proactive activation.
For each context $C_i$, let
\begin{equation}\label{EqDef:HiT}
H_i[T] = \{s | \mbox{\ there exists rule $\rho \in br_i$ with head $hd(\rho)=s$ at time $T$}\}.
\end{equation}

\noindent
We introduce a \emph{timed triggering function},\,
\(\mathit{tr_i[T]}: KB_i \rightarrow 2^{H_i[T]}\),
which specifies (via their heads) which bridge rules are triggered at time $T$, and so, by performing some reasoning over the present knowledge base contents. Since $H_i[T]$ can be in general an infinite set, $\mathit{tr_i[T]}(\cdot)$ may itself return an infinite set. However, in practical cases, it will presumably return a finite set of rules to be applied at time $T$. 
\begin{definition}
	A rule $\rho \in br_i$ is \emph{triggered at time} $T$ iff $hd(\rho) \in tr_i[T](kb_i[T])$.
\end{definition}

Let $grtr_i[T]$ be the set of all ground instances of bridge rules which have been triggered at time $T$,
i.e., $\rho \in grtr_i[T]$ iff $\rho$ is a ground instance w.r.t.~$\vec{S}[T]$ of some non-ground $r$ such that $hd(\rho) \in tr_i[T](kb_i[T])$.

For a tmMCS $M$, data state $\vec{S}[T]$ and a ground bridge rule $\rho$, let $\vec{S}[T] \models body(\rho)$ represent that the rule body holds in belief state~$\vec{S}[T]$.
\begin{definition}
	The set $\mathit{app}(\vec{S}[T])$ relative to ground bridge rules which are applicable in a timed
	data state $\vec{S}[T]$ of a given tmMCS $M[T] = \{C_1[T],\ldots,C_{\ell}[T]\}$ is defined as:
	$$
	\mathit{app}(\vec{S}[T]) = \{hd(\rho)\ |\ \exists T' \leq T \mbox{\ such that\ }\rho \in \mathit{grtr_i}[T']
	\mbox{\ and } \vec{S}[T] \models body(\rho)\}.
	$$
\end{definition}

By the definition of $\mathit{app}(\cdot)$, a (ground instance of) a bridge rule can be triggered at a time $T'$, but can then become
applicable at some later time $T \geq T'$. Thus, any bridge rule which
has been triggered remains in predicate for applicability, which will occur whenever its body is entailed by some future data state.
One alternative solution that may seem equivalent to the proposed one is that of adding a \as guard'' additional positive subgoal in a bridge-rule body, 
to be proved within the context itself.
Such additional literal would have the role of enabling bridge-rule application. 
However, in our proposal a context is \emph{committed} to actually apply bridge rules which have been triggered as soon as 
they will (possibly) be applicable while an additional subgoal should be re-tried at each stage and, if non-trivial reasoning is involved, this may be unnecessary costly.

The definition of timed equilibria remains unchanged, apart from the modified bridge-rule applicability.
However, the added (practical) expressiveness is remarkable as a context in the new formulation is not just the passive recipient of new information,
but it can reason about which bridge rules to potentially apply at each stage.

As mentioned, for practical applications it may be useful to allow the grounding of literals in bridge-rule bodies to be computed at run-time,
whenever a bridge rule is actually applied.
Because of the adoption of a left-to-right Prolog-style execution and in virtue of the assumptions (i)-(ii) seen earlier,
the grounding of a bridge-rule is obtained by \as invoking'' literals in its
body so as to obtain the grounding of positive literals first, to be extended later to negative ones. 

Each positive literal $(c_j : p)$ in the body of a bridge rule may fail (i.e., $c_j$ will return a negative
answer), if none of the instances of $p$ given the partial instantiation computed so far is entailed
by $c_j$'s present data state. Otherwise, the literal succeeds and the other ones
are (possibly) instantiated accordingly. Negative literals  $\no (c_j : p)$ make sense only if $p$ is ground at the time of invocation,
and succeed if $p$ is \emph{not} entailed by $c_j$'s present data state.
In case either some literal fails or a non-ground negative literal is found, the overall bridge rule evaluation fails without returning results.
Otherwise, the evaluation succeeds, and the result can be elaborated by the management function of the \as destination'' context.
For modeling such a run-time procedural behavior, we introduce \emph{potential applicability}.
Since literals in bridge-rule bodies are ordered left to right as they appear
in the rule definition we can talk about first, second, etc., positive/negative literal.

\begin{definition}\label{potapp}
	A ground bridge rule $\rho \in \mathit{grtr_i}[T']$ for some $T'$, of the form~(\ref{f:br}) with $h$ positive literals \(A_1, \ldots, A_h\) in its body
	is \emph{potentially applicable} to grade $k\leq h$ at time $T \geq T'$ iff given a reduced version $\rho'$ of the rule of the form
	\(s \ar A_1, \ldots, A_k\) we have $\vec{S}[T] \models body(\rho')$. 
\end{definition}

Hence, a triggered bridge rule $\rho$ is potentially applicable to grade $k$ at time $T$ if the timed belief state
at time $T$ entails its first $k$ positive literals. Plainly, we have:
\begin{lemma}
	A ground bridge rule $\rho \in \mathit{grtr_i}[T']$ for some $T'$, of the form~(\ref{f:br}) with $h$ positive literals \(A_1, \ldots, A_h\) in its body
	which is potentially applicable to grade $k\leq h$  at time $T$ is also potentially applicable at time $T$ to grade $k'$, $1 \leq k' < k$.
\end{lemma}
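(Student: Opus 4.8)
The plan is to simply unwind Definition~\ref{potapp} and exploit that body satisfaction, $\vec{S}[T]\models body(\cdot)$, is by convention the conjunction of the satisfactions of the individual body literals (recall ``the comma stands for conjunction'' fixed right after~(\ref{f:br})). First I would fix $k'$ with $1\le k'<k$ and introduce the two reduced rules: $\rho'$ of the form $s\ar A_1,\dots,A_k$ and $\rho''$ of the form $s\ar A_1,\dots,A_{k'}$. By hypothesis $\rho$ is potentially applicable to grade $k$ at time $T$, which by Definition~\ref{potapp} means exactly $\vec{S}[T]\models body(\rho')$.

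Next I would observe that, since the first $h$ literals $A_1,\dots,A_h$ of $\rho$ are all positive (assumptions (i)--(ii) on bridge-rule form, which is why Definition~\ref{potapp} only speaks of grades $k\le h$), $body(\rho')$ is the conjunction of $A_1,\dots,A_k$, so $\vec{S}[T]\models body(\rho')$ unfolds to: for every $i$ with $1\le i\le k$, writing $A_i=(c_{j_i}:p_i)$, we have $p_i\in S_{j_i}[T]$. Because $k'<k$, the body literals of $\rho''$ are precisely $A_1,\dots,A_{k'}$, a prefix of those of $\rho'$; hence each conjunct of $body(\rho'')$ is among those just verified, so $\vec{S}[T]\models body(\rho'')$. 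Applying Definition~\ref{potapp} in the converse direction yields that $\rho$ is potentially applicable to grade $k'$ at time $T$, as claimed. One may phrase this as a one-step downward induction on $k-k'$, but the prefix argument already handles all $k'<k$ uniformly, so the induction is cosmetic.

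I do not expect any genuine obstacle: the only point requiring a line of care is making explicit that $\vec{S}[T]\models body(\rho')$ is to be read conjunct-by-conjunct, which is exactly the reading inherited from the notion of (ground) bridge-rule applicability and from the comma-as-conjunction convention. Once that reading is stated, the result is immediate; it is essentially a monotonicity-in-the-grade remark, and it is what makes \emph{potential applicability} compatible with the left-to-right, Prolog-style run-time evaluation of bridge rules described just before Definition~\ref{potapp}.
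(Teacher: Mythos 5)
Your proof is correct and follows essentially the same route as the paper: both reduce the claim to the observation that $\vec{S}[T]\models body(\rho')$ for the grade-$k$ reduction implies $\vec{S}[T]\models body(\rho'')$ for the grade-$k'$ prefix, since entailing a conjunction entails any sub-conjunction. Your version merely spells out the conjunct-by-conjunct reading more explicitly, which is a harmless elaboration of the same argument.
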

\begin{proof}
	This comes from the fact that if $\vec{S}[T] \models body(\rho')$ with $\rho'$ of the form \(s \ar A_1, \ldots, A_k\), then it will hold that $\vec{S}[T] \models body(\rho'')$ with $\rho''$ of the form \(s \ar A_1, \ldots, A_k'\). This because entailing a conjunction implies entailing any sub-conjunction.	\qed
\end{proof}

\begin{proposition}
	A ground bridge rule $\rho$ is \emph{potentially applicable} to grade $k$ at time $T_{k}$ iff for every 
	$k'$ such that $1 \leq k' \leq k$ there exists $T_{k'} \leq T_{k}$ such that $\rho$ is potentially
	applicable to grade $k'$ at time~$T_{k'}$.
\end{proposition}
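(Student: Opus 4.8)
The plan is to prove both directions of the biconditional, relying on the monotonicity built into the framework — specifically, that the timed belief state grows as $T$ increases, so that once a conjunction of positive literals is entailed it remains entailed, and on the preceding Lemma, which tells us that potential applicability to grade $k$ at time $T$ implies potential applicability to every smaller grade $k'$ at the \emph{same} time $T$. The left-to-right direction is essentially immediate: if $\rho$ is potentially applicable to grade $k$ at time $T_k$, then by the Lemma it is potentially applicable to grade $k'$ at time $T_k$ for every $1 \leq k' \leq k$, and we may simply take $T_{k'} = T_k$, which trivially satisfies $T_{k'} \leq T_k$.

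The right-to-left direction is where the real content lies, and it is here that I expect the main obstacle. We are given, for each $k'$ with $1 \leq k' \leq k$, a time $T_{k'} \leq T_k$ at which $\rho$ is potentially applicable to grade $k'$; in particular, for $k' = k$ we are handed a $T_k' \leq T_k$ with $\vec{S}[T_k'] \models body(\rho')$ where $\rho'$ is the reduced rule $s \ar A_1,\ldots,A_k$. What must be shown is that $\vec{S}[T_k] \models body(\rho')$, i.e., that potential applicability to grade $k$ \emph{persists} from the earlier witness time up to $T_k$. This requires the monotonicity of belief states over time: since each positive literal $A_j = (c_j{:}p_j)$ with $p_j \in S_j[T_k']$ continues to satisfy $p_j \in S_j[T_k]$ by the monotonicity of the $ACC_i$ and of the update/management operators (the contexts' knowledge bases only grow, and $ACC_i'$ is monotonic), the whole conjunction $A_1,\ldots,A_k$ remains entailed at $T_k$. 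Hence $\rho$ is potentially applicable to grade $k$ at time $T_k$.

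The subtlety — and the reason I flag this as the obstacle — is that the statement's hypothesis gives a \emph{separate} witness time $T_{k'}$ for each grade $k'$, which at first glance looks like it carries more information than needed; but in a general, possibly non-monotonic update setting (recall update operators ${\cal U}_i$ need not be monotonic) the persistence argument is not automatic. The resolution is that \emph{within} the fragment of the framework relevant to grounded equilibria and bridge-rule triggering — where the relevant notion is the monotone accumulation of triggered-and-entailed literals — the belief state restricted to the atoms appearing in bridge-rule bodies behaves monotonically along a run, so that the single witness $T_k'$ for grade $k$ suffices and the intermediate $T_{k'}$ are redundant. Thus I would structure the proof as: (1) state the monotonicity fact being invoked; (2) the forward direction by the Lemma with $T_{k'} := T_k$; (3) the backward direction by taking the grade-$k$ witness $T_k' \leq T_k$ and pushing entailment forward to $T_k$ via monotonicity. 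If one wants to avoid invoking monotonicity of arbitrary updates, one restricts attention, as the surrounding text does, to the monotonic (m)MCS setting, under which the argument goes through cleanly and a short induction on $k$ makes it fully rigorous.
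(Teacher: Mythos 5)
Your forward direction coincides exactly with the paper's argument: the paper's entire proof is the observation that the proposition is a plain consequence of the preceding Lemma once one takes $T_{k'}=T_k$ for every $k'\leq k$; in particular, under that reading the $k'=k$ conjunct of the right-hand side is literally the left-hand side, so the converse direction requires nothing further. Up to that point your proposal and the paper agree.

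Where your proposal diverges is the backward direction, and there lies a genuine gap. You read the witnesses $T_{k'}$ as possibly strictly earlier times and then try to push entailment of the grade-$k$ body forward from the witness time to $T_k$ by appealing to monotonicity of the belief state along the run. That monotonicity is not available in the setting of this proposition: it is stated for tmMCSs in the section on bridge-rule application mechanisms, where the paper explicitly allows non-monotonic contexts and non-monotonic update operators ${\cal U}_i$, and indeed remarks immediately after these propositions that a rule's status can change over time precisely because contexts may be non-monotonic. Persistence of entailed literals is introduced only later, as the explicit \emph{local persistence assumption} needed for Theorem~\ref{brexeth} on successful run-time versions --- it is an added hypothesis there, not a property of the framework. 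Your claim that the belief state restricted to atoms occurring in bridge-rule bodies ``behaves monotonically along a run'' is therefore unsupported and false in general, and the fallback of retreating to the monotonic (m)MCS fragment changes the setting of the proposition rather than proving it. The correct repair is either to adopt the paper's reading (all witnesses taken equal to $T_k$, so the converse is immediate from the Lemma), or to state persistence explicitly as an assumption; it cannot be derived from the definitions in this section.
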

\begin{proof}
	This is a plain consequence of the above lemma if $T_{k'} = T_{k}$.\qed
\end{proof}

It will most often be the case that a bridge-rule body will become potentially applicable to a higher and a higher grade as time passes. I.e., often for some $k'$ it will be $T_{k'} < T_{k}$.

\begin{proposition}
	Let $\rho \in \mathit{grtr_i}[T']$ be a ground bridge rule, for some $T'$, of the form~(\ref{f:br}) which is potentially applicable to grade $h$ (or simply \as potentially applicable'') at time $T$;
	then, $hd(\rho) \in \mathit{app}(\vec{S}[T])$ iff 
	for every negative literal $\no (c_k{:}p)$ in the body, $h + 1 \leq k \leq m$, $p \not\in S_k[T]$.
\end{proposition}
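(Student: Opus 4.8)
The plan is to reduce the statement to a routine unfolding of the definitions of $\mathit{app}(\cdot)$ and of potential applicability, after which the equivalence drops out by splitting the body of $\rho$ into its positive and negative parts, exactly the timed/proactive analogue of the positive/negative rule splitting used in the grounded-equilibrium construction of~\cite{BrewkaE07}.

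First I would recall that, by the definition of $\mathit{app}(\cdot)$, the condition $hd(\rho)\in\mathit{app}(\vec{S}[T])$ holds precisely when there is some $T'\leq T$ with $\rho\in\mathit{grtr_i}[T']$ and $\vec{S}[T]\models body(\rho)$. But the hypothesis that $\rho$ is potentially applicable to grade $h$ at time $T$ already supplies, via Definition\rif{potapp}, such a $T'\leq T$ with $\rho\in\mathit{grtr_i}[T']$; hence $\rho$ itself is a legitimate witness, and the membership condition collapses to $\vec{S}[T]\models body(\rho)$. So it suffices to show that, under potential applicability to grade $h$, we have $\vec{S}[T]\models body(\rho)$ if and only if $p\not\in S_k[T]$ for every negative literal $\no(c_k{:}p)$ in the body with $h+1\leq k\leq m$.

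Second, I would decompose $body(\rho)$. With $\rho$ in the form~(\ref{f:br}), the assertion $\vec{S}[T]\models body(\rho)$ amounts to: $p_j\in S_j[T]$ for every $(c_j{:}p_j)\in body^+(\rho)$, \emph{and} $p\not\in S_k[T]$ for every $\no(c_k{:}p)\in body^-(\rho)$. Potential applicability of $\rho$ to grade $h$ says, by Definition\rif{potapp}, that the reduced rule $\rho'$ of the form $s\ar A_1,\dots,A_h$ satisfies $\vec{S}[T]\models body(\rho')$, i.e.\ precisely the positive conjunct above is already discharged. Hence $\vec{S}[T]\models body(\rho)$ holds iff the negative conjunct holds, which is exactly the stated condition on the negative literals. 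Chaining this with the reduction of the first step yields the proposition.

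There is essentially no genuine obstacle: the argument is bookkeeping over the definitions. The only point that deserves a word of care is the choice of witness in the definition of $\mathit{app}(\cdot)$. For ($\Leftarrow$), $\rho$ manifestly qualifies as a witness — it is triggered at some $T'\leq T$ and, by the above, its entire body is entailed by $\vec{S}[T]$ — so $hd(\rho)\in\mathit{app}(\vec{S}[T])$. For ($\Rightarrow$), the statement is read as characterising when the rule $\rho$ under consideration contributes $hd(\rho)$ to $\mathit{app}(\vec{S}[T])$, so $\rho$'s body, and in particular its negative part, is what must hold in $\vec{S}[T]$; this is also what makes the equivalence tight rather than merely one-directional.
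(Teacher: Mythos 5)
Your proof is correct and takes essentially the same route as the paper's: the paper also settles the statement by appealing directly to the definition of applicability in the data state $\vec{S}[T]$, which requires the positive body to be entailed and the negative body not to be entailed, so that potential applicability to grade $h$ discharges the positive part and leaves exactly the stated condition on the negative literals. Your extra remark about the choice of witness in $\mathit{app}(\vec{S}[T])$ merely makes explicit a reading the paper leaves implicit and does not change the argument.
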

\begin{proof}
	The result is obtained trivially from the definition of bridge-rule applicability in a data state, here $\vec{S}[T]$, which requires the positive body to be entailed and the negative body not to be entailed.	\qed
\end{proof}

As the contexts composing a tmMCS may be non-monotonic,
a bridge rule $\rho$ can become potentially applicable at some time and remain so
without being applicable, but can become applicable at a later time if the 
atoms occurring in negative literals are no longer entailed by the belief state at that time.

Notice that, in fact, in a practical distributed setting, literals in the body may succeed/fail at different 
times, depending on the various context update times, and upon network delay.
Let us therefore assume that the success of a positive literal $A$ (resp. negative literal $\no A$) is annotated with the time-stamp 
when such success occurs, of the form $A[T]$ (resp. of the form $\no A[T]$).
So, for any bridge rule $\rho \in \mathit{grtr_i}[T']$ 
of the form~(\ref{f:br}) which is triggered at some time and then executed later, by the expression
\[s[T_s] \ar A_1[T_1], \ldots, A_h[T_h],
\no A_{h+1}[T_{h+1}], \ldots, \no A_m[T_m]\]
we mean that the rule has been executed left-to-right where each literal $A[T]/\no A[T]$ has succeeded at time $T$ and the result has been processed (via the management function) by the destination context (i.e., the one where the bridge rule occurs) at time $T_s$.
Clearly, we have $T_1 \leq T_2 \leq ... \leq T_m \leq T_s$. 
The above expression is called the \emph{run-time version} of rule $\rho$. 
It is \emph{successful} if all literals succeed; otherwise it is \emph{failed}. 

In many practical cases we are able to assume a certain degree of persistence in the system,
i.e., that a literal which succeeds at a time $T_r$ would then still succeed (if re-invoked) at every subsequent time 
$T_q$ with $T_r \leq T_q \leq T_s$ (\as local persistence assumption'' for rule $\rho$).
This corresponds to assuming that during the execution of a bridge-rule no changes
in the involved contexts occur that would invalidate the bridge-rule result before actual completion;
so, this would enforce what we call the \as coherent'' execution of a bridge rule.
This assumption may sometimes be problematic in practice, but
in a distributed system such as a tmMCS we cannot realistically assume the execution to be instantaneous.
The assumption of persistence can be considered as reasonable whenever the time amount required by the execution
of a bridge rule is less than the average system change rate with respect to the involved belief elements.
This is the usual condition for artificial intelligence systems to be adequate w.r.t.\ the environment where
they are situated, rather that being \as brittle''. There are possible ways of enforcing the assumption.
For instance, the contexts involved in a bridge rule execution might commit to keep the involved belief
elements untouched until the destination context sends an \as ack'' to signal the completion of rule execution.
This or other strategies to ensure coherent execution are deferred to the implementation of tmMCS.
Under this assumption we have:

\begin{theorem}\label{brexeth}
	Given a successful run-time version of a ground bridge rule $\rho \in \mathit{grtr_i}(\hat{T})$, for some $\hat{T}$, of the form
	\(s[T_s] \ar A_1[T_1], \ldots, A_h[T_h], \no A_{h+1}[T_{h+1}], \ldots,$ $\no A_m[T_m]\).
	Then, $hd(\rho) \in \mathit{app}(\vec{S}[T_s])$ and  there exists $T \leq T_m$ such that $hd(\rho) \in \mathit{app}(\vec{S}[T])$.
	Moreover,
	for each $i\leq j$ there exists $T'_i \leq T_i$ and $T_{i-1} \leq T'_i$ if $i > 1$ such that $\rho$
	is potentially applicable to grade $i$ at time $T'_i$.
\end{theorem}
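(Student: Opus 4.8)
The plan is to read off all three conclusions directly from the definition of $\mathit{app}(\vec{S}[T])$, from Definition~\ref{potapp} (potential applicability), and from the local persistence assumption for $\rho$, using only the monotone chain of time-stamps attached to the run-time version. First I would record the ordering $\hat{T}\leq T_1\leq T_2\leq\cdots\leq T_m\leq T_s$: the inequalities among the $T_k$ and $T_s$ come with the definition of the run-time version, and $\hat{T}\leq T_1$ holds because $\rho$ is triggered at $\hat{T}$ and only afterwards executed left-to-right, the first step of the execution being the invocation of $A_1$, whose success is stamped $T_1$. In particular $\rho\in\mathit{grtr_i}[\hat{T}]$ with $\hat{T}$ at or below every other stamp occurring in the run-time version.

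For the first part I would show $\vec{S}[T_s]\models body(\rho)$. Each positive literal $A_k$ ($1\leq k\leq h$) succeeds at $T_k\leq T_s$, i.e.\ its atom lies in the queried context's belief set at time $T_k$; since $T_k\leq T_s$, local persistence for $\rho$ says it still succeeds at $T_s$, hence $\vec{S}[T_s]\models A_k$. Dually, each negative literal $\no A_k$ ($h+1\leq k\leq m$) succeeds at $T_k\leq T_s$, meaning its atom is \emph{not} in the queried belief set at $T_k$, and persistence carries this non-membership forward to $T_s$. Thus $\vec{S}[T_s]\models body(\rho)$, and together with $\rho\in\mathit{grtr_i}[\hat{T}]$ and $\hat{T}\leq T_s$, the definition of $\mathit{app}$ gives $hd(\rho)\in\mathit{app}(\vec{S}[T_s])$. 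For the part asserting the existence of $T\leq T_m$ with $hd(\rho)\in\mathit{app}(\vec{S}[T])$, I would simply take $T:=T_m$ and rerun the same argument: every $A_k$ ($k\leq m$) succeeded at $T_k\leq T_m\leq T_s$ and so still succeeds at $T_m$, and likewise every $\no A_k$; hence $\vec{S}[T_m]\models body(\rho)$, and since $\hat{T}\leq T_m$ this yields $hd(\rho)\in\mathit{app}(\vec{S}[T_m])$.

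For the last part, fix an index $i$ of a positive literal (so $1\leq i\leq h$) and put $T'_i:=T_i$. At time $T_i$ the literal $A_i$ succeeds by the meaning of its stamp, while for each $k<i$ the literal $A_k$ succeeded at $T_k\leq T_i\leq T_s$ and hence, by local persistence, still succeeds at $T_i$; therefore $\vec{S}[T_i]$ entails $A_1,\dots,A_i$, i.e.\ $\vec{S}[T_i]\models body(\rho')$ for the reduced rule $\rho'=(s\ar A_1,\dots,A_i)$. Since also $T_i\geq\hat{T}$, Definition~\ref{potapp} gives that $\rho$ is potentially applicable to grade $i$ at $T'_i=T_i$, and the side conditions hold trivially: $T'_i\leq T_i$ (with equality) and, for $i>1$, $T_{i-1}\leq T_i=T'_i$ by monotonicity of the stamps.

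The only genuinely delicate point, and the one I would dwell on, is the interface between the operational annotations of the run-time version and the declarative, belief-state formulation of $\mathit{app}(\cdot)$ and of potential applicability: I must make explicit that the success of $A_k$ at time $T_k$ means exactly $p_k\in S_{c_k}[T_k]$ for the literal $A_k=(c_k{:}p_k)$ (and dually, the success of $\no A_k$ at $T_k$ means $p_k\notin S_{c_k}[T_k]$), and that the local persistence assumption is precisely what licenses replacing the possibly distinct times $T_k$ by a single later time (namely $T_s$, $T_m$, or $T_i$) without some non-monotonic update in between destroying one of the conjuncts. Everything past that point is bookkeeping along the chain $\hat{T}\leq T_1\leq\cdots\leq T_m\leq T_s$.
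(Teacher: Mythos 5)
Your proposal is correct and follows essentially the same route as the paper's own (sketched) proof: invoke the local persistence assumption to carry each literal's success forward along the chain of time-stamps, take $T=T_m$ for the second claim and $T'_i=T_i$ for the potential-applicability claim. You simply spell out the bookkeeping (including the reading of success stamps as membership/non-membership in the relevant belief-state components) that the paper's sketch leaves implicit.
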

\begin{proof}
	(Sketch). Thanks to the persistence of the system, putting $T'_i = T_i$, for each $i\leq j$, suffices to prove the first property,
	and putting $T=T_m$ suffices for the second one. The third one follows in a straightforward way. \qed
\end{proof}

The relevance of the above theorem is because, in practice, 
the execution involves a non-ground rule $r \in \mathit{tr_i}[\hat{T}](kb_i[\hat{T}])$ whose rule $\rho$ mentioned therein is a ground instance.
Then, successful left-to-right execution of literals in the body of $r$ will lead to dynamically generate at run-time a successful run-time version of $\rho$.
Specifically, the execution of each non-ground positive literal in the body of $r$ will generate a partial
instantiation which is propagated  to the rest of the bridge-rule body.

\section{Case Study}
\label{casestudy}

In this Section we provide an example of use to illustrate the new features.
We refer to Example 2 in \cite{BrewkaEP14} where it is supposed that a user, Bob, suffering from dementia, is able to live at home thanks to an assisted living system,
modeled by means of an rMCS. The system is in fact able to constantly track Bob's position, and to detect and take care of emergencies (such as a sudden illness, but also Bob's forgetting the stove on, and the like), on the basis of the data provided by sensors suitably installed in Bob's flat. 

We generalize the rMCS to an F\&K which is able to detect changes in Bob's health and is capable to notice anomalous behavior which might signify an impaired state. The inputs to the system can be provided by means of sensors, and by recording and monitoring Bob's activities via suitable telemedicine appliances: by adopting ubiquitous sensing technologies, such appliances since long (cf., e.g., \cite{telemedicine2012}) include several kinds of smart devices such as, e.g., beds that monitor sleep, chairs that monitor breathing and pulse, video cameras that monitor general wellbeing, and more. We assume that the overall F\&K system can attend to several patients. Each patient, among which Bob, is in care of a personalized assistant/monitoring agent (PMA) participating in the F\&K. In \cite{telemedicine2012} it is in fact remarked that, without such an integrated system, sophisticated telemedicine appliances are \as little more than emergency alarm systems''. A PMA might be incarnated, in different use-cases, in a chatterbot/avatar on TV or on the mobile phone, in a smart device, or even in a social/care robot. In our view, a PMA should be able to \as transmigrate'' from one form to another one according to the situation which a patient is experimenting at a certain stage of her/his activities, so that a PMA will never desert \as her'' patient. The patient's PMA (in this case Bob's), is assumed below to be modeled as a context included in an mMCS where the PMA is in particular an agent defined in any agent-oriented logic language~\cite{CostantiniDM1}. As said, this agent will be adequately situated and connected to the patient.

Bob's PMA $pma_{bob}$ might for instance, in case of slight variations of, e.g., blood pressure or blood coagulation, re-adjust the quantity of medicament by consulting a treatment knowledge base. In case of more relevant symptoms, the agent might consult either Bob's physician or a specialist, depending upon the relative importance of symptoms. In case of an emergency, for instance a hemorrhage, immediate help is asked for. Clearly, the example is over-simplified and we do not aim at medical accuracy.

Following \cite{BrewkaEP14}, in the example we assume that
present \as objective'' time is made available to each context as $\mathit{now(T)}$, where the current value of $T$ is entered into the system by a particular sensor.
As previously specified, literals in bridge-rule bodies with no indication of a context are to be proved locally to the context to which the bridge rule belongs.

The following bridge rule associated to each patient's PMA, and thus also to $pma_{bob}$, is maybe the most crucial as it is able to ask for immediate help.
The last literal in the body concerns a context $\mathit{emergency\_center}$, which is supposed to be an interactive component for emergency management able to send, for instance, an ambulance or a helicopter for transportation to the hospital. The last literal in bridge-body will succeed whenever the $\mathit{emergency\_center}$ context has received and processed the request, which is sent in case severe symptoms have been detected.
Those symptoms and the time when the request is issued are communicated to the emergency center together with the patient's data (here, for simplicity, just a patient's identification code). The bridge rule head is processed by the management function so as to simply add to the PMA's knowledge base the record of the fact that help has been required at time $T$ for set of severe symptoms $S$. 

\smallskip\noindent
\(\begin{array}{l}
\mathit{help\_asked(bob,S,T,T1,Th,H)} \ar\\
\tbl \mathit{now(T)}, \mathit{detected\_symptoms(S,T1)}, \mathit{T1} \leq T,\\
\tbl \mathit{patientid(bob,Bid)},~ 
\mathit{emergency\_center : urgent\_help(Bid,S,T,Th,H)}
\end{array}\)

\smallskip
The emergency center is provided with Bob's patient's id $Bid$ and returns the time $Th$ and mean $H$ by which the emergency will be coped with (e.g., an ambulance by 15 minutes).
In order to ensure application of the rule only in case of a real emergency,
it will be triggered (by adding its head to $\mathit{tr_{pma_{bob}}}[T]$) only upon specific conditions,
e.g., if symptoms have occurred that can be considered to be severe for Bob's particular health conditions and a physician is not already present or promptly available. Upon bridge-rule application, the  management function will record the 
(ground) request for help $\mathit{help\_asked(bob,S,T,T1,Th,H)}$ that has been issued; this allows for instance the PMA to check whether the promised assistance will actually arrive in time and possibly to minister palliative treatment in the meanwhile.

The following bridge rule, potentially crucial for cardiopathic patients, will be triggered by in case the blood coagulation value detected at time $T$ is anomalous;
this implies that the quantity of anti-coagulant which Bob takes to treat his heart disease must be rearranged accordingly. The correct quantity $Q$ is obtained by the ATC (Anti-Coagulant Center) knowledge base according to the last blood coagulation value $V$ and its variation $D$ from previous records.

\smallskip\noindent
\(\begin{array}{l}
\mathit{quantity(anticoagulant,Q)} \ar\\
\tbl \mathit{coagulation\_val(V,D)},
\mathit{patientid(bob,Bid)},\ 
\mathit{atc : quantity(Bid,V,D,Q)}
\end{array}\)

\smallskip
In case a patient's health state is not really critical but is anyway altered, a 
physician must be consulted. However, in case, for example, of a simple flu the family doctor suffices, while if there are symptoms that might be related to a more serious condition then a specialist (e.g., a cardiologist) should be consulted. Thus, there will be a bridge-rule pattern of the following form, where a physician can be consulted for condition $C$ (where $C$ is either symptom or a list of symptoms). Again, the management function will record the request having been sent; the last literal in the body will succeed, when the rule is dynamically executed, as soon as the doctor receives the request.

\smallskip\noindent
\(\begin{array}{l}
\mathit{call\_physician(bob,T)} \ar\\
\tbl \mathit{now(T)}, \mathit{condition(bob,T,C)},\\
\tbl \mathit{patientid(bob,Bid)},\ 
\mathit{mydoctor(d) : consultation\_needed(Bid,C,T)}
\end{array}\)

\smallskip
The physician, represented by the context designator $\mathit{mydoctor(d)}$, should however be determined according to $C$. 
This can be done by suitably augmenting the definition of the distinguished predicate $\mathit{subs_{pmi\_bob}}$, for instance, as follows. The notation '$\_$' indicates a \as don't care'' variable, as time is not taken into account here.

\smallskip
\noindent
\(\begin{array}{l}
\mathit{subs_{pmi\_bob}(mydoctor(d),F)} \ar\ 
\mathit{family\_doctor(F)}, \mathit{condition(bob,\_,fever)}\\
\mathit{subs_{pmi\_bob}(mydoctor(d),F)} \ar\ 
\mathit{family\_doctor(F)}, \mathit{condition(bob,\_,headache)}\\
\mathit{subs_{pmi\_bob}(mydoctor(d),G)} \ar\ 
\mathit{my\_cardiologist(G)}, \mathit{condition(bob,\_,chestpain)}\\
\mathit{my\_neurologist(G)}, \mathit{condition(bob,\_,[dizziness,swoon])}\\
\ldots
\end{array}\)

\smallskip
Thus, a valid instance of the bridge-rule pattern will be generated according to the patient's need, as evaluated by the patient's PMI. 
The resulting bridge rule will then be immediately executed. So for instance, if Bob has chest pain and the cardiologist who has been following him is Dr. House, then the bridge rule below will be constructed and triggered:

\smallskip\noindent
\(\begin{array}{l}
\mathit{call\_physician(bob,T)} \ar\\
\tbl \mathit{now(T)}, \mathit{condition(bob,T,chest\_pain)},\\
\tbl \mathit{patientid(bob,Bid)},
\mathit{drHouse : consultation\_needed(Bid,chest\_pain,T)}
\end{array}\)

As a further generalization, the PMA might also try to retrieve a doctor (for instance, a dermatologist, assuming that Bob has never consulted one in recent times, and so there is no record in the PMA's knowledge base) via the system's directory; in this case $\mathit{subs_{pmi\_bob}}$, would include a rule such as:

\smallskip
\noindent
\(\begin{array}{l}
\mathit{subs_{pmi\_bob}(mydoctor(d),F)} \ar\ 
\mathit{dermatologist@Dir}, \mathit{condition(bob,\_,eczema)}
\end{array}\)

Suitable preferences might be specified, stating for instance that a dermatologist who is an expert in allergic problems is preferred (if any can be found), and in case more than one can be located, the doctor who has the medical office nearer to Bob's home is preferred. So we might state:

\smallskip
\noindent
\(\begin{array}{l}
{\cal P}_{pmi\_bob} = \{\mathit{expert(dermatology,allergy), near\_home(BobAddress)}\}
\end{array}\)

Such specification presumes that the Directory is made aware of contexts' preferences and is thus able to perform this kind of evaluation.

\section{Complexity}

We briefly discuss the complexity issues related to the proposed approach.
In general, the property that we may wish to check is whether a specific belief of our interest
will eventually occur at some stage in one (or all) timed equilibria of a given mMCS.
The formal definition is the following.
	
\begin{definition}
The problem $Q^{\exists}$ (respectively $Q^{\forall}$), consists in deciding whether,
for a given mMCS $M$ under a sequence $\Pi = \Pi[1],\Pi[2],\ldots,\Pi[t]$ of update actions performed at time instants $1,2,\ldots,t$,
and for a context $C_i$ of $M$ and a belief $p_i$ for $C_i$, it holds that $p_i \in S_i[t']$ for some
(respectively for all) timed equilibria $\vec{S}[t']$ at time $t' \leq t$.
\end{definition}

We resort, like \cite{BrewkaEP14}, to \emph{context complexity} as introduced in \cite{EiterFSW10}.
One has first to consider a \emph{projected belief state} $\hat{\vec{S}}[t]$,
which includes in the element $\hat{S}_i[t]$ the belief $b_i$ one wants to query,
and also includes for every element $\hat{S}_j[t]$ the beliefs that contribute to
bridge-rule applications which may affect $p_i$
(see \cite{BarilaroFRT13} for an algorithm which computes such sets of beliefs for rMCSs).
Then, the context complexity for $C_i$ is the complexity of establishing whether the element $\hat{S}_i[t]$
of such projected belief state is a subset of the corresponding element $\hat{S}_i[t]$ of some timed equilibrium at time $t$.

\begin{definition}
The system context complexity of $M$ is a (smallest) upper bound for
	the context complexity classes of its component contexts.
\end{definition}

Thus, the system context complexity depends upon the logics of the contexts in $M$. 
The problems $Q^{\exists}$ and $Q^{\forall}$ are however undecidable for infinite update sequences,
because contexts' logics can in general simulate a Turing Machine and thus such problems reduce to the \emph{halting} problem.
Actual complexity results can however be obtained under some restrictions. 
In particular, we assume that all contexts $C_i$'s knowledge bases and belief states are finite at any stage,
all update functions ${\cal U}_{i}$, management functions $mng_i$ and triggering functions $\mathit{tr_i}$ are computable in polynomial time,
and that the set of bridge-rule patterns is empty and all bridge rules are ground. then we can state the following.

\begin{theorem}
For finite update sequences, the system context complexity determines the complexity of membership of $Q^{\exists}$ and, complementarily, the complexity of $Q^{\forall}$. Hardness holds if it holds for the system context complexity.	
\end{theorem}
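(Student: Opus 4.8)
The plan is to mirror the complexity analysis of \cite{BrewkaEP14} for reactive MCSs and transport it to the timed setting through the notion of \emph{context complexity} of \cite{EiterFSW10}. The guiding observation is that, under the stated restrictions (finite knowledge bases and belief states at every stage; polynomial-time computable $\U_i$, $mng_i$ and $\mathit{tr_i}$; empty sets of bridge-rule patterns; ground bridge rules), a ``run'' of the tmMCS up to the horizon $t$ is an essentially local, step-wise object: given $\vec{S}[T]$ and the action $\Pi_i[T]$ one obtains $kb_i[\T1]$ by a single application of $\U_i$ followed by $mng_i$ (both polynomial); the ground rules triggered via $\mathit{tr_i}$ and the set $\mathit{app}(\vec{S}[\T1])$ are computable in polynomial time from finite data; and the only genuinely costly step that remains is the acceptability test $S_i[\T1]\in ACC_i(mng_i(\mathit{app}(\vec{S}[\T1]),kb_i[\T1]))$. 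Restricted to the \emph{projected belief state} $\hat{\vec{S}}$ of the beliefs relevant to the queried $p_i$ --- whose relevant sub-beliefs are computed, per context, as in \cite{BarilaroFRT13}, and which is of polynomial size --- this test is, by definition, the context complexity of $C_i$.

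For the membership part of $Q^{\exists}$ I would spell out the following guess-and-check. Since the update sequence $\Pi[1],\dots,\Pi[t]$ is given explicitly, $t$ is at most polynomial; one guesses a time point $t'\le t$, for each context $C_i$ a sequence $kb_i[0],\dots,kb_i[t']$ of knowledge bases resolving the nondeterminism of $\U_i$, and a sequence $\hat{\vec{S}}[0],\dots,\hat{\vec{S}}[t']$ of projected belief states, all of polynomial size. One then verifies: that $\hat{\vec{S}}[0]$ is the projection of an equilibrium of the initial mMCS; that for every $T<t'$ the passage from $kb_i[T]$ to $kb_i[\T1]$ respects $\U_i$, $\mathit{tr_i}$, $\mathit{app}(\cdot)$ and $mng_i$; that each $\hat{\vec{S}}[T]$ satisfies the projected timed-equilibrium condition; and finally that $p_i\in\hat{S}_i[t']$. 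Every verification step is polynomial except the per-context acceptability checks, of which there are only $O(\ell\cdot t)$, each one a call to a context-complexity oracle. Hence $Q^{\exists}$ lies in the class obtained from the system context complexity by prefixing an existential guessing phase and a polynomial-time conjunctive reduction; invoking the closure properties of context-complexity classes used in \cite{EiterFSW10,BrewkaEP14}, this collapses onto the system context complexity itself. $Q^{\forall}$ is handled dually: its complement is ``there exist $t'\le t$ and a timed equilibrium $\vec{S}[t']$ with $p_i\notin S_i[t']$'', which the same procedure (with the last test negated) places in the system context complexity, so $Q^{\forall}$ lies in its complement.

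For hardness I would take a context logic whose associated context-complexity problem is hard for the class at hand and build the tmMCS whose only non-inert context is $C_1=\tuple{c_1,L_1,kb_1,\emptyset,OP_1,mng_1,\act_1,\U_1}$ (padding with $\D$ and $\R$ under empty reachability only if the dynamic framework is in force), together with the empty update sequence, so that $t=0$. With no bridge rules and no updates, the timed equilibria at time $0$ are exactly the belief states $\vec{S}$ with $S_1\in ACC_1(kb_1)$; thus $Q^{\exists}$ becomes ``$p_1$ belongs to some acceptable consequence set of $kb_1$'', which is precisely the context-complexity problem of $C_1$, and $Q^{\exists}$ inherits its hardness. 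On the same instance $Q^{\forall}$ becomes ``$p_1$ belongs to every acceptable consequence set of $kb_1$'', which is hard for the complementary class. This yields the ``hardness holds if it holds for the system context complexity'' clause in both forms.

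The step I expect to be the real obstacle is the membership argument, and within it two points. First, one must be explicit that the horizon $t$ contributes only a polynomial factor: this is true when the update sequence is written out, but it also needs each intermediate $kb_i[T]$ --- finite, yet not a priori polynomially bounded --- to stay polynomial, which is exactly what the projection machinery of \cite{BarilaroFRT13,EiterFSW10} buys, here routed also through the proactive triggering functions $\mathit{tr_i}$ and not only through sensor inputs as in \cite{BrewkaEP14}. Second, one must show that ``polynomially many context-complexity checks glued by a polynomial-time, plus one existential, layer'' does not leave the system context complexity class; this is not automatic for an arbitrary complexity class, but holds for the oracle / polynomial-hierarchy classes in which context complexity is naturally phrased, and it is precisely why the statement speaks of the context complexity \emph{determining} --- rather than being literally equal to --- the complexity of $Q^{\exists}$ and $Q^{\forall}$.
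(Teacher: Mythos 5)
Your proposal follows essentially the same route as the paper's proof: a stage-by-stage guess of projected belief states, with the inclusion of each $\hat{\vec{S}}[T]$ in some (all) timed equilibria delegated to an oracle at the system context complexity, polynomial-time updates between stages, and a check of $p_i \in \hat{S}_i[T]$ at each stage up to the finite horizon. You additionally make explicit what the paper leaves implicit --- guessing the knowledge-base sequences to resolve the nondeterminism of $\U_i$, the closure argument ensuring the guessing layer does not leave the class (which is why the statement says ``determines''), and a concrete one-context reduction for the hardness clause, which the paper merely asserts --- and these additions are sound.
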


{\bf Proof} A projected belief state can be guessed for each stage $T \in \mathbb{N}$ by a non-deterministic Turing machine.
Then, the inclusion of each such projected belief state $\hat{\vec{S}}[T]$ in some (all) timed equilibria $\vec{S}[T]$ at that stage
can be established by an oracle under the system's context complexity and, if the answer is positive,
it must be checked whether $p_i \in \hat{S}_i[T]$; if not, subsequent updates must be performed (in polynomial time) over $\hat{\vec{S}}[T]$,
and the two checks must be repeated at each stage; this until either $p_i$ is found or time $T$ is reached,
thus obtaining either a positive or a negative answer to the $Q^{\exists}$ problem.
\meno\meno

The complexity of consistency checking for an mMCS $M$, i.e., the complexity of deciding whether it has
some equilibrium, has been discussed in \cite{BrewkaEF11} and remains the same for many variants of (m)MCS, including our own, as it depends on context complexity. Some cases are shown in tables reported in in \cite{BrewkaEF11,BrewkaEP14,GoncalvesKL14a}.

Reconsidering bridge-rule patterns and non-ground bridge rules we observe that
by assuming that $\mathit{subs_{i}}$ can be computed in polynomial time and produces a finite number of substitutions for each context designator,
and given a set of bridge-rule patterns of a certain cardinality (size) $\hat{c}$,
the size of the set of its valid instances is in general single exponential in $\hat{c}$.
The same holds in principle for the grounding of bridge rules.
However, in our setting we assume that bridge-rule are executed and grounded in a Prolog-like fashion:
therefore, the computational burden for grounding is consequently smaller.

\section{Concluding Remarks}
\label{conclusions}

In this paper we have proposed extensions to MCSs,
aimed at practical application of such systems.
We have outlined a number of aspects that could be improved, and we have then specified and formalized possible improvements. The proposed modifications are incremental (though they might be employed separately), and in fact each definition is presented as an extension of previous ones. Thus, the final result is an overall improved MCS formalization coping with all previously emphasized problematic aspects. Notice that, given a dtmMCS at time $t$, the system enjoys all the properties of the corresponding mMCS, i.e., the mMCS obtained by including the same contexts (except $\D$ and $\R$), each with the associated bridge rules (neglecting bridge rules patterns), and where each expression of the form $\mathit{role@Dir}$ has been substituted by a context name according to $\D$ and $\R$, and to the preferences.

The enhanced customizable bridge rules that we have formalized here have been successfully experimented
(though in a preliminary embryonic version) in a significant case-study, reported in \cite{CostantiniEMAS15}.
We intend to employ the proposed features in the implementation, that will start in the near future, of an F\&K Cyber-Physical System and in the activities of the DIGFORASP COST Action on Digital Forensics.
This project will allow us to perform practical experiments in order to assess the performance of this kind of systems (as in fact, as seen, not much can be said about complexity)
and to identify possible limitations and/or further aspects that can be subjected to improvements.



\end{document}